\theoremstyle{plain}
\newtheorem{theorem}{Theorem}[section]
\newtheorem{proposition}[theorem]{Proposition}
\theoremstyle{definition}
\newtheorem{definition}[theorem]{Definition}
\theoremstyle{remark}
\icmltitlerunning{Leveraging Approximate Symbolic Models for RL via Skill Diversity}
\begin{document}

\newcommand{\KGRLFull}{Approximate Symbolic-Model Guided Reinforcement Learning}
\newcommand{\KGRL}{ASGRL }
\newcommand{\KGRLAlgo}{ASGRL}

\newcommand{\MVTRFull}{minimally viable task representation}
\newcommand{\MVTR}{MVTR}

\newcommand{\Shortcite}[1] {\citeauthor{#1}~\citeyear{#1}}
\twocolumn[
\icmltitle{Leveraging Approximate Symbolic Models for Reinforcement Learning via Skill Diversity}



\icmlsetsymbol{equal}{*}

\begin{icmlauthorlist}
\icmlauthor{Lin Guan}{equal,asu}
\icmlauthor{Sarath Sreedharan}{equal,asu}
\icmlauthor{Subbarao Kambhampati}{asu}
\end{icmlauthorlist}

\icmlaffiliation{asu}{School of Computing \& AI, Arizona State University, Tempe, AZ}

\icmlcorrespondingauthor{Lin Guan}{lguan9@asu.edu}

\icmlkeywords{RL, Planning, Neuro-Symbolic AI}

\vskip 0.3in
]



\printAffiliationsAndNotice{\icmlEqualContribution} 

\begin{abstract}
    Creating reinforcement learning (RL) agents that are capable of accepting and leveraging task-specific knowledge from humans has been long identified as a possible strategy for developing scalable approaches for solving long-horizon problems. While previous works have looked at the possibility of using symbolic models along with RL approaches, they tend to assume that the high-level action models are executable at low level and the fluents can exclusively characterize all desirable MDP states. Symbolic models of real world tasks are however often incomplete. To this end, we introduce {\em Approximate Symbolic-Model Guided Reinforcement Learning}, wherein we will formalize the relationship between the symbolic model and the underlying MDP that will allow us to characterize the incompleteness of the symbolic model. We will use these models to extract high-level landmarks that will be used to decompose the task. At the low level, we learn a set of diverse policies for each possible task subgoal identified by the landmark, which are then stitched together. We evaluate our system by testing on three different benchmark domains and show how even with incomplete symbolic model information, our approach is able to discover the task structure and efficiently guide the RL agent towards the goal.
\end{abstract}
\section{Introduction}
In recent years, reinforcement learning (RL) methods have demonstrated an impressive ability in tackling many hard sequential-decision making problems. However, most practical reinforcement learning (RL) systems still struggle in solving long-horizon tasks with sparse rewards. Part of the challenge comes from the fact that traditionally RL methods tend to focus on agents that start \emph{tabula rasa} and acquire task information purely from experience (i.e., experience ironically sampled from expert specified simulators). While theoretically, one could inject knowledge about the task through careful reward engineering, such methods are generally non-intuitive and hard for non-AI experts to specify and may result in unanticipated side effects \cite{Menell17Invers}. This has resulted in interest in imbuing RL systems the ability to accept advice from humans through more intuitive means. Most of the earlier works in this direction focused primarily on accepting advice about agent objectives \cite{icarte18aUsing}, though recent works have looked at developing systems that accept task-level advice. In particular, symbolic planning models \cite{geffner2013concise} have been considered as a viable method for the specification of task information \cite{lyu2019sdrl, IllanesYIM20Symbolic}. Unfortunately, most of these works tend to have strong requirements (sometimes implicit ones) on the correctness of the symbolic models provided.

\begin{figure}
\begin{center}
\centerline{\includegraphics[width=0.32\columnwidth]{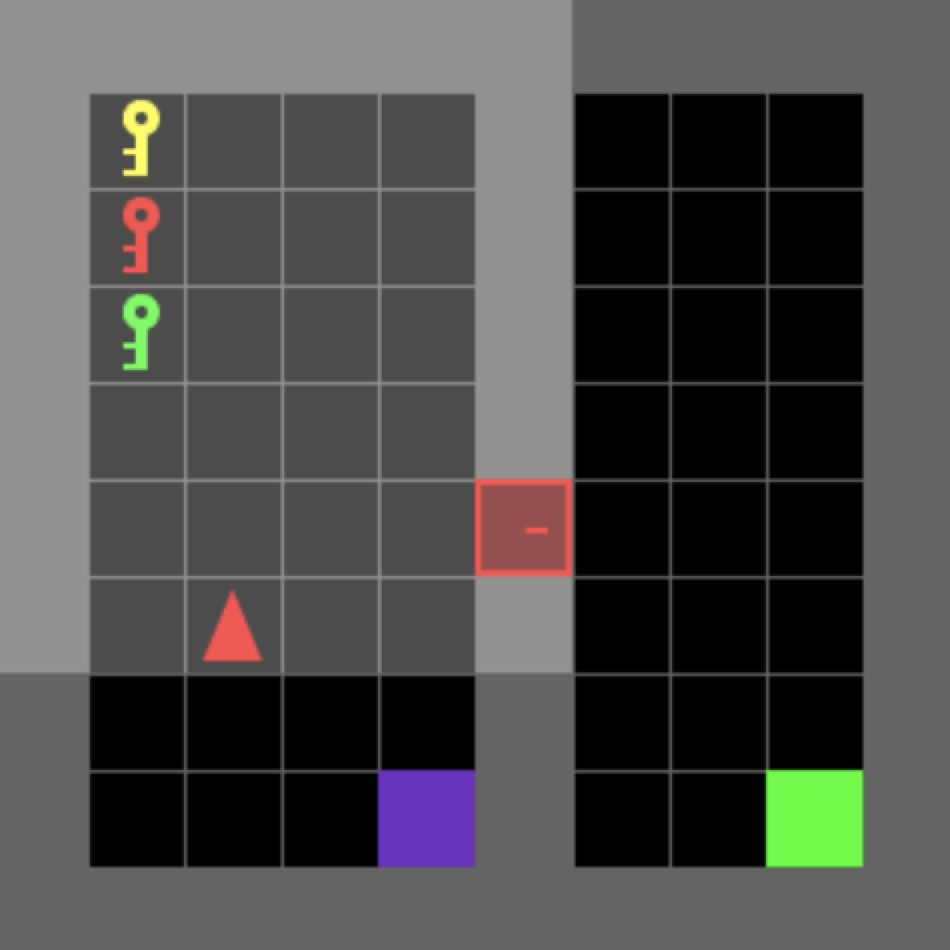}}
\caption{The Household environment. To reach the final destination (green block), the robot (red triangle) has to pick up the red key, charge itself at the purple block, and open the red door.}
\label{fig:dock-env}
\end{center}
\vspace{-0.8cm}
\end{figure}

To illustrate the importance of accounting for possible incompleteness in symbolic models, consider a simple household robotics domain (henceforth referred to as the Household environment) where the task is for a robot to visit a particular location represented by the green block (Fig. \ref{fig:dock-env}). The robot can do that by picking up the red key, then re-charging, then opening the door to visit the final location. A human user could potentially help the robot in its learning process by providing various pieces of information related to the task. For example, providing information such as the location of the destination and the fact that the door is locked and would require a key to open. However, such information, potentially provided as a symbolic model, need not be a complete representation of the task and may even contain incorrect information. For one, they may have forgotten to mention the fact that there are multiple keys in the house, and that only one of them can open the door. They may thus have incorrectly specified that the robot can use any of the keys. In this case, the symbolic model only \textbf{partially specifies} the prerequisites for opening the door. Secondly, the user may not be a robotics expert and might not know that this particular robot model has limited battery capacity and would require recharging itself in the middle of the task (by visiting the charging dock). In this case, features related to the charging dock and the robot's battery level may be \textbf{completely missing} in the symbolic model. Thirdly, the user would expect the door to remain ajar after the robot enters the room, but in reality the door will be automatically closed once it enters the room. In this case, the symbolic model might \textbf{incorrectly specify} that the effect of the action of passing through the door is both that the robot is in the destination room and the door is still ajar. As a result, existing approaches \cite{IllanesYIM20Symbolic,lyu2019sdrl} that expect a correct and complete model will fail due to multiple reasons. For one, the robot will never learn a policy that will allow it to achieve a state where the door is still open and the robot is in the destination room. If the robot tries to myopically learn a policy to pick up a key, it will only pick up the nearest key (which happens to be the wrong key); and finally even if it picks up the right key, learning to unlock the door (as per the symbolic model) will just leave the robot's battery drained as it doesn't know that it needs to charge itself before attempting to unlock the door.

In this work, we hope to address this challenge, by proposing a framework called \KGRLFull~(\KGRLAlgo) \footnote{Our source code is available at \url{https://github.com/GuanSuns/ASGRL}.} that will allow RL agents to leverage approximate symbolic models, i.e., models that may be incomplete and may contain incorrect information. In this framework, we will formalize the relationship between the symbolic model and the true task being solved by the agent, which will allow the model to be approximate while still encoding useful information. We will show how given such a relationship, we can extract task decomposition information in the form of landmarks from the model that will act as subgoals for the overall task. We will then use these subgoals in the RL problem by learning a set of diverse low-level policies aimed at achieving them. 


We note that the main motivation for our work is to allow for human advice to take the form of approximate symbolic models of the task--even if those are incorrect and incomplete. Such incorrectness and incompleteness in symbolic models may either be the result of lack of full knowledge on the part of the humans, or because the advice was extracted indirectly--either from specialized text descriptions of the domain (c.f. \cite{DBLP:conf/ijcai/FengZK18, gpt3-to-plan}) or from general large language models (e.g. \cite{saycan2022arxiv}). 
We will show that the way our RL system uses this symbolic advice preserves completeness under fairly permissive conditions (characterized by the MVTR conditions defined in Section~\ref{sec:mvtr}). In addition to potential efficiency and sample complexity improvements, such advice can also improve interpretability as it aligns with human expectations. It's also worth noting that, the advice provided by the symbolic model may include human preferences and safety constraints, apart from domain information.

\section{Related Work}
\label{sec:rel-works}

There has been increasing interest in incorporating human knowledge into reinforcement learning systems for task specification or better sample efficiency \cite{zhang2019leveraging}. In particular, works have argued for developing methods to incorporate human guidance in the form of symbols, which is a natural way for humans to express knowledge \cite{zhang2018composable, kambhampati2022symbols}. One particular form is task knowledge encoded as symbolic planning models \cite{geffner2013concise}. Some prominent works in this direction include \cite{yang2018peorl,IllanesYIM20Symbolic,lyu2019sdrl,kokel2021reprel}. Most of these works assume that the high-level plans are in some way an executable entity. They assume that the mapping from the high-level actions to potentially temporally extended operators are either given or can to be learned. 


However, human understanding of a task is incomplete by nature and human-defined symbols are known to be imprecise, so some works both in symbolic planning \cite{myers-advisable} and reinforcement learning have allowed the user to actively participate in policy learning by continually providing feedback to refine the reward function \cite{basu2018learning, guan2021widening}. In this work, we do not consider additional repeated human supervision, as they can be costly and even when available can be applied on top of the method discussed here. 

Some works in symbolic planning also allow for inaccurate models, but they do not consider the use of reinforcement learning \cite{rao-model-lite, nguyen2017robust, DBLP:conf/rss/VemulaOBL20, vemula2021cmax++}.
Outside the use of full symbolic models, people have also considered the use of other forms of high-level advice like policy sketches \cite{andreas2017modular}, natural language instructions \cite{goyal2019using} and temporal logic specifications \cite{icarte18aUsing,de2019foundations,jothimurugan2021compositional}.
Such advice tends to be a lot more restrictive than the information that can be encoded in planning models. 

As we will see later, we incorporate a Quality-Diversity objective in policy learning. Diversity of plans and policies has been studied both in symbolic planning \cite{tuan-diverse} and reinforcement learning for better robustness \cite{haarnoja2017reinforcement, Kumar2020OneSI} and better exploration \cite{florensa2017stochastic, Achiam2018VariationalOD, Eysenbach2019DiversityIA, lee2019efficient}. In this work, we show that diversity can also be used to make up for the incompleteness in symbolic knowledge.  


\section{Background}
\label{sec:background}
This paper focuses on the basic problem studied in standard RL settings, namely one with an agent acting in an unknown environment, trying to learn a policy that can maximize its expected value. Let the agent task correspond to an infinite horizon discounted MDP $\mathcal{M}=\langle S,A,R,T,s_0,\gamma\rangle$, where $S$ is the set of (possibly infinite number of) states, $A$ the set of actions, $R: S \rightarrow \mathbb{R}$ a possibly sparse reward function and $T: S\times A\times S\rightarrow [0,1]$ be the transition function and $s_0\in S$ is the starting state for the agent. In particular, we will focus on cases where the task is goal directed such that, there exist a set of goal states $S_{G} \subseteq S$. These are absorbing states in the sense that, for $s\in S_{G}$, $T(s,a,s')=0$ for all actions $a$ and state $s'\neq s$. The reward function takes the form of a sparse function that assigns $0$ reward to all states except the goal states, i.e, $\forall s \in S_{G}, R(s) = \mathcal{R}^{G}$, where $\mathcal{R}^{G}$ will be referred to as the goal reward. We will refer to this unknown MDP $\mathcal{M}$ as the task MDP. The reward function or goal states may be part of the task that the agent is interacting with or may be specified at the beginning of the learning phase by the user of the system.

In this setting, the solution takes the form of a deterministic, stationary policy that maps states to actions. A value of a policy $\pi$, denoted as $V^\pi: S \rightarrow \mathbb{R}$ provides the expected cumulative discounted reward obtained by following the policy from a given state. A policy is said to be optimal if there exists no policy with a value higher than the current policy. Finally an execution trace of a policy from a state $s$ corresponds to a state-action sequence with non-zero probability that terminates at a goal state (denoted as $\tau \sim \pi(s)$, where $\tau = \langle s,\pi(s),...,s_k\rangle$). In this particular case, since the reward is only provided by the goal, the value of the policy in a state is directly proportional to the probability of reaching the goal state under the given policy. We will capture this by the notation $P_\mathcal{G}(s|\pi)$, which is given as $$P_\mathcal{G}(s|\pi) = \Sigma_{\tau \sim \pi(S)} P( \tau|\pi),$$
where $\tau$ are possible traces ending in a goal state and $P( \tau|\pi)$ is its likelihood under the given policy $\pi$. 

In the paper, we consider cases where the user or a domain expert provides task-specific knowledge captured in the form of a declarative action-centered representation of a planning task. In particular, we will focus on STRIPS-like planning models \cite{Fikes71STRIPS}, where a planning model is defined in the form $\mathcal{P}=\langle F,A,I,G\rangle$. $F$ is a set of propositional state variables or fluents that defines the symbolic state space (i.e., $S^{\mathcal{P}}= 2^F$, i.e., each symbolic state corresponds to a set of binary fluents that are true).
The possible fluents for the household environment could include facts like \texttt{has-key} (a fluent capturing whether the robot has a key in its position), \texttt{door-open} (whether the door is open), etc. 
In our case, the fluent set will be the only component that we will assume to be grounded directly to the true model. We will in fact assume access to a function $\mathcal{F}: S\times F \rightarrow \{0,1\}$ mapping MDP states to symbolic fluents, such that $\mathcal{F}(s,f)$ is interpreted as fluent $f$ being true in MDP state $s \in S$. In practice, this mapping function can be constructed using learned binary classifiers \cite{Sreedharan20Bridge, zhang2018composable, lyu2019sdrl}. Every other mapping from the symbolic model components to the MDP and vice-versa will be defined over $\mathcal{F}$ or will be extracted while making use of $\mathcal{F}$.

Furthermore, $A$ is the set of action definitions, where each action $a\in A$ is defined further as $a=\langle \textit{prec}^a,\textit{add}^a,\textit{del}^a,\rangle$. Here $\textit{prec}^a$ is a set of binary features that are referred to as precondition and an action is said to be executable in a state only if the precondition feature are true in that state. $\textit{add}^a \subseteq F$ and $\textit{del}^a\subseteq F$ capture effects of executing the action, where  $\textit{add}^a$ called add effects capture the set of feature set true by executing the action and $\textit{add}^a$ called delete effects capture the set of feature set false by executing the action. Thus the effect of executing in a symbolic state $s^\mathcal{P}$ can be denoted as

\vspace*{-0.2in}
\[a(s^\mathcal{P})=\begin{cases}(s^\mathcal{P}\setminus\textit{del}^a)\cup \textit{add}^a, & \textrm{if}~\textit{prec}^a\subseteq s^\mathcal{P}\\
\textit{undefined}, &\textrm{otherwise}
\end{cases}\]
\vspace*{-0.1in}

Finally, $I \in S^\mathcal{P}$ is the initial state of the agent and $G \subseteq F$ is the goal specification. 
Without loss of generality, we will assume that the goal specification always consists of a single goal fluent. 
In the Household environment, the initial state is given as $I=\{\texttt{at-starting-room}\}$ and the goal is defined as  $G=\{\texttt{at-destination}\}$.
The approximate symbolic model used for the Household environment can be found in the Appendix \ref{appendix:model}.
All states that satisfy the goal specification (i.e., the goal fluents are true in the given state) are considered a valid goal state, and the set of all goal states are given as $S^{\mathcal{P}}_G = \{s^\mathcal{P} \mid~s^\mathcal{P}\in S^\mathcal{P} \wedge G \subseteq s^\mathcal{P}\}$. Note that the above definition captures a set of actions with deterministic effects (i.e., there is no uncertainty associated with the execution of an action), thus the solution for a deterministic planning problem is a plan. A plan consists of a sequence of actions which when executed in the initial state, result in a goal state, i.e., $\pi^{\mathcal{P}} = \langle a_1^{\mathcal{P}},...,a_n^{\mathcal{P}}\rangle$ is considered a valid plan, if
\[\pi^{\mathcal{P}}(I) = a_n^{\mathcal{P}}(...(a_1^{\mathcal{P}}(I))....) \in S^{\mathcal{P}}_G.\]
The symbolic state sequence corresponding to a plan is the sequence of symbolic states obtained by applying the actions in the plan to the initial state, i.e., for the plan $\pi^{\mathcal{P}}$, we have a state sequence of the
form $\tau_{\pi^{\mathcal{P}}}=\langle s_0^{\mathcal{P}},...,s_k^{\mathcal{P}} \rangle$, such that $s_0^{\mathcal{P}} = I$ and 
$s_i^{\mathcal{P}} = a_i^{\mathcal{P}}(...(a_1(s_0^{\mathcal{P}})...)$. With the definition of a plan and a symbolic state sequence in place, we are now ready to define the actual formal problem tackled by our approach.

\section{\KGRLFull}
\label{sec:method}
In this section, we will introduce the first formal framework to quantify the relationship between true task models and STRIPS models that may contain imprecise and incomplete domain knowledge (Section \ref{sec:mvtr}). Then, based on the formal framework, we will present our solution strategy that can extract useful subgoal information from the inaccurate symbolic model and construct reward functions that guide the RL agent towards the final goal state (Section \ref{sec:approach}).

\subsection{Minimally Viable Task Representation}
\label{sec:mvtr}
An \KGRL agent aims to find a policy to reach a goal state starting from an initial state $s_0$ in an (unknown) MDP $\mathcal{M}$. The \KGRL problem setting expects the agent to have access to a declarative model of the form $\mathcal{P}^\mathcal{M}$. This model is meant to capture a high-level representation of the task, possibly given by a domain expert. Such models are particularly well suited for task knowledge specification, since in addition to being quite general, these have their origins in folk psychological concepts and are as such easier for people to understand and specify \cite{miller2019explanation,chakraborti2019plan}. We also chose to use a deterministic model to encode the high-level information since people are generally known to be poor probabilistic reasoners \cite{tversky1993probabilistic}. Although, when available these models could easily be extended to allow for such information.

To proceed, we first define the concept of a trace being an instantiation of a symbolic plan as follows  
\begin{definition}
Given a symbolic plan for $\mathcal{P}$ of the form $\langle a_1^\mathcal{P},...,a_k^\mathcal{P}\rangle$ with a corresponding symbolic state sequence $\tau^\mathcal{P}=\langle I,..,s_{i}^{\mathcal{P}},..,s_k^{\mathcal{P}}\rangle$ (where $s^\mathcal{P}_i = a_i^\mathcal{P}(...(a_1^\mathcal{P}(I))..)$), a trace $\tau=\langle s_0,....,s_k\rangle$ (i.e., MDP state sequence) is said to be an \textbf{instantiation of the symbolic plan},
if the relative ordering of the fluents established by the symbolic state sequence is reflected in the trace, i.e., for any ordered fluent pair $f_i \prec f_j$ according to $\tau^\mathcal{P}$, there must exist a state $s_n$ in $\tau$ such that $\mathcal{F}(s_n,f_j)=1$ (and it is not true in any earlier states), and there exists a state $s_m \in \tau$ such that $m < n$ and $\mathcal{F}(s_m,f_i)=1$.
\end{definition}

In addition, throughout this paper, we will be using the given high-level symbolic model to extract relative ordering between fluents, which is defined as follows
\begin{definition}
A symbolic state sequence $\tau_{\pi^{\mathcal{P}}}=\langle s_0^{\mathcal{P}},...,s_k^{\mathcal{P}} \rangle$  is said to establish a relative ordering $f_i \prec f_j$ (to be read as fluent $f_i$ precedes $f_j$), if $s_n^{\mathcal{P}}$ is the first state where $f_j$ is true and there exists a state $s_m^{\mathcal{P}}$ such that $m < n$ where $f_i$ is true.
\end{definition}
The relative ordering information captures the fact that according to the model at hand, it is possible to establish the fact $f_i$ before achieving the fact $f_j$.

Now, to establish the relationship between the underlying task and the high-level task information provided by the domain expert, we will introduce the concept of \emph{Minimally Viable Task Representation} (MVTR). MVTR will allow us to capture the fact that the model provided to us by the domain expert, by the very nature of it being an approximation of the task, is going to be incomplete and may even contain incorrect information about the task. But at the same time, the model could still contain useful information that can be leveraged by the agent to achieve its objectives. 

Let us note a few ways the setting is more permissive: (a) We should only expect that the symbolic model is a minimal characterization of the task in that it might only partially capture information about a \emph{single} trace that leads to the goal in some policy. In other words, the model may allow for several other plans that may contain invalid information. (b) We do not expect any symbolic state to completely capture the information in low-level states. As we will see later, one symbolic state may correspond to a diverse set of low-level states with different utilities for the given task. (c) We do not even expect that any of the symbolic plans at the high levels are executable in any meaningful sense (i.e., each action in the plan can be mapped to an exact temporally extended operator that can be executed at low level). For us, actions are merely a conceptual tool employed by the model specified to encode their knowledge about the \emph{order in which the various facts need to be established}.

All of this stands in stark contrast to many of the previous methods that require a much more \emph{complete} symbolic model and in many cases, the decision-making problem turns merely into a search for the exact plan that can be executed in the task MDP. We believe that our method is a lot more realistic, as it corresponds to the case where the model-specifier may have some specific strategies in mind for the agent to achieve the goal, but may have overlooked many of the contingencies and possible side-effects. Additionally, the model-specifier may not even be an expert on all aspects of the problem and as such may be oblivious to certain state variables.

More formally, we introduce MVTR, which characterizes a \emph{more relaxed condition} under which there is still a guarantee that we can extract usable task-relevant information from an imperfect symbolic model $\mathcal{P}$.
\begin{definition}
\label{def:min_viable}
For an MDP $\mathcal{M} = \langle S,A,R,T,s_0,\gamma\rangle$ with a goal state set $S_{\mathcal{G}}$, a symbolic planning model $\mathcal{P} = \langle F,A,I,G\rangle$ is said to be a \textbf{minimally viable task representation} if the following two conditions are met
\begin{enumerate}
    \item \textit{The fluents that are part of goal specification are only true in goal states, or there exists a subset of goal states $\hat{S}_{\mathcal{G}} \subseteq S_{\mathcal{G}}$, such that (a) $\forall s\in \hat{S}_{\mathcal{G}}$ and $ \forall f_g\in G$,  $\mathcal{F}(s,f_g)=1$; (b) for any state $\hat{s}$ in $(S \setminus \hat{S}_{\mathcal{G}})$, there exists a fact $f_g\in G$, such that $\mathcal{F}(\hat{s},f_g)=0$}.
    \item \textit{
    There exists at least one goal-reaching trace $\tau=\langle s_0,....,s_g\rangle$, where $s_g \in S_{\mathcal{G}}$, which is an instantiation of a symbolic plan for $\mathcal{P}$;
    in other words, there exists a policy $\pi$ that leads to the goal with a non-zero probability from the initial state $s_0$ in the MDP $\mathcal{M}$ (i.e., $P_{\mathcal{G}}(s_0|\pi) > 0$) such that at least one goal-reaching trace $\tau$ sampled from $\pi$ will satisfy the relative fluent ordering encoded in the symbolic plan for $\mathcal{P}$
    }.
\end{enumerate}
\end{definition}

The relaxation of the requirement of individual symbolic actions to correspond to specific temporally extended operators mirrors the intuition that has been established in multiple previous works regarding the semantics of abstract actions. Namely, providing an exact and concise definitions of abstract or temporally extended actions is quite hard \cite{SrivastavaRP16Meta,MarthiRW07Angelic}. While we do not leverage the full semantics of angelic non-determinism employed by some of the earlier work, the MVTR does allow for the fact that the effects of an action may be a loose characterization of a set of reachable states and there may not be one exact state that satisfies all action effects. For example, in the Household environment, the human might mistakenly specify the effects of the action \texttt{pass\_through\_door} as \texttt{door-ajar} and \texttt{at-final-room}, though in reality, \texttt{door-ajar} only holds when the robot is passing through the door, and once the robot is in the final room, the door will become closed. Here, without expecting \texttt{pass\_through\_door} to be executable, an MVTR will only capture the fact that \texttt{door-ajar} and \texttt{at-final-room} are useful characteristics for the underlying task. Also, note that MVTR's definition of the relative ordering does not place any ordering between fluents that are achieved at the same step in the symbolic model (e.g., \texttt{door-ajar} and \texttt{at-final-room}).

Appendix \ref{appendix:general-mvtr} includes additional discussion on the generality of the MVTR requirement, including the fact that it captures cases where the symbolic model is a state abstraction of the task MDP. Also note that while most of the above discussion focuses on cases where the symbolic model is given as guidance for an RL agent trying to solve a task, like in the case \cite{IllanesYIM20Symbolic}, the symbolic model may itself be used as a way for the user to specify the RL agent task (or to define task reward). In such cases, the goal states $S_\mathcal{G}$ are completely specified by the features the user includes in the symbolic goal specification.

We will now see how we can derive information that can be used by an RL agent once we are given a symbolic model that is known to be an MVTR for the true task.
We should note that like {\em subgoal serializability} \cite{Korf-serializability,kambhampati1996candidate}, MVTR is a {\em normative condition} in that when it happens to hold, the completeness of RL using the advice is guaranteed. Our focus in this paper is not on providing necessary and sufficient syntactic conditions on the symbolic model to guarantee MVTR (although, as we argued, MVTR is a fairly permissive condition in as much as it only requires at least one final trajectory to be consistent with the symbolic model). 

\subsection{Subgoal Extraction and Skill Learning}
\label{sec:approach}
Once we have a minimally viable symbolic model in place, the first question we need to answer is what information from the symbolic model can we use to guide the RL agent to the goal state(s)? First off, since the action effects may not be directly achievable in a state, we can't learn temporally extended operators for the task MDP. One could try to use the relative orderings that are encoded in the plans, but as Definition \ref{def:min_viable} puts it, only some of the plans capture true ordering information, and even when they do, the information is just ordering information. Thus iterating over all possible plans and testing whether they encode useful information could be a hard learning problem. Instead, as we will see, we can extract a single type of information i.e., fact landmarks and their relative orderings, that is guaranteed to hold in the task MDP.
\begin{definition}
For a given planning problem $\mathcal{P}=\langle F, A, I, G\rangle$, the fact landmarks are given by the tuple $\mathcal{L}=\langle \hat{F}, \prec_{\mathcal{L}}\rangle$ such that $\hat{F} \subseteq F$ and $\prec_{\mathcal{L}}$ defines a partial ordering between elements of $\hat{F}$, such that if for $f_1, f_2 \in \hat{F}$, we have $f_1 \prec_{\mathcal{L}} f_2$, then the relative ordering $f_1 \prec f_2$ is satisfied by the symbolic state sequence corresponding to every valid plan in $\mathcal{P}$.
\end{definition}
That is, landmarks encode information that is valid in all plans and thus is also valid in the one(s) that capture information of the goal reaching policy, which leads us to
\begin{proposition}
The relative ordering established by the landmarks corresponding to a minimally viable symbolic model should hold in at least one trace that can be sampled from a policy for the task MDP with a non-zero probability of reaching the goal state.
\end{proposition}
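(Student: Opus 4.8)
The plan is to establish the statement purely by chaining the three relevant definitions---minimally viable task representation, fact landmarks, and instantiation of a symbolic plan---without invoking any machinery beyond them. First I would invoke condition~2 of Definition~\ref{def:min_viable}: since $\mathcal{P}$ is an MVTR for $\mathcal{M}$, there exists a policy $\pi$ with $P_{\mathcal{G}}(s_0\mid\pi)>0$ and a goal-reaching trace $\tau=\langle s_0,\dots,s_g\rangle$ sampled from $\pi$ that is an instantiation of some fixed symbolic plan $\pi^{\mathcal{P}}$ for $\mathcal{P}$, with corresponding symbolic state sequence $\tau^{\mathcal{P}}$. This single trace is the witness the proposition asks for; everything else is showing that the landmark ordering is among the orderings it already respects.

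Next I would transfer the landmark ordering onto $\tau^{\mathcal{P}}$. By the definition of fact landmarks, whenever $f_1\prec_{\mathcal{L}} f_2$, the relative ordering $f_1\prec f_2$ is established by the symbolic state sequence of \emph{every} valid plan in $\mathcal{P}$. Because $\pi^{\mathcal{P}}$ is in particular a valid plan, each landmark-ordered pair $f_1\prec_{\mathcal{L}} f_2$ is established as a relative ordering $f_1\prec f_2$ by $\tau^{\mathcal{P}}$. The landmark orderings are thus a subset of the relative orderings carried by $\tau^{\mathcal{P}}$.

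Finally I would push these orderings down to the MDP trace through the definition of instantiation. Since $\tau$ instantiates $\pi^{\mathcal{P}}$, for \emph{any} ordered pair $f_i\prec f_j$ established by $\tau^{\mathcal{P}}$ there is a state $s_n\in\tau$ with $\mathcal{F}(s_n,f_j)=1$ (and $f_j$ false at all earlier states) preceded by some $s_m$ with $m<n$ and $\mathcal{F}(s_m,f_i)=1$. Applying this to each landmark pair $f_1\prec_{\mathcal{L}} f_2$, which we just showed is such an ordered pair of $\tau^{\mathcal{P}}$, shows that the landmark ordering holds in $\tau$, completing the argument since $\tau$ is sampled with nonzero probability from the goal-reaching policy $\pi$.

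The step I expect to require the most care is the middle one: confirming that the partial order $\prec_{\mathcal{L}}$ genuinely embeds into the relative orderings of the \emph{particular} plan named by the MVTR condition, rather than merely some plan. This is where the universal quantifier in the landmark definition (``every valid plan'') does the real work---without it, the plan witnessing the MVTR condition need not respect $\prec_{\mathcal{L}}$. A secondary subtlety worth stating explicitly is that both the relative-ordering and the instantiation definitions are phrased in terms of the \emph{first} state at which a fluent becomes true; one should check that these ``first occurrence'' clauses are compatible, so that a landmark pair established by $\tau^{\mathcal{P}}$ indeed matches the antecedent of the instantiation definition. Neither is a genuine obstacle, but both are the places where an overly loose reading of the definitions could create a gap.
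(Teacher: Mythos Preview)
Your proposal is correct and follows essentially the same approach as the paper, which justifies the proposition in a single sentence immediately preceding it: ``landmarks encode information that is valid in all plans and thus is also valid in the one(s) that capture information of the goal reaching policy.'' You have simply unpacked this observation carefully across the three relevant definitions, and the subtleties you flag (the universal quantifier in the landmark definition doing the work, and the compatibility of the first-occurrence clauses) are handled correctly.
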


We can extract these landmarks through efficient algorithms like the one discussed by \Shortcite{KeyderRH10Sound}. In particular, we will focus on facts that appear in action preconditions (this will further avoid unnecessary side effects). Note that, goal facts are always landmark facts and there will be a precedence ordering between all other facts and the goal facts. Some of the landmark facts for the Household environment would be $\texttt{has-key}\prec\texttt{door-open}\prec\texttt{at-final-room}$.

Now, these landmarks provide a natural way to decompose the full task into subgoals and we can even provide the relative ordering between the subgoals. These landmarks thus allow us to use a hierarchical reinforcement learning framework \cite{kulkarni2016hierarchical}, where we can first learn how to achieve these subgoals using temporally extended operators and then learn a meta controller that will try to use these operators to achieve the eventual goal. In particular, we will learn options \cite{sutton1999between} for each subgoal.

\begin{definition}
A subgoal skill for a given landmark fact $f$ is an option for the task MDP $\mathcal{M}$ of the form $\mathcal{O}_f = \langle \mathbb{I}_f, \mathbb{G}_f, \pi_f\rangle$, where $\mathbb{I}_f \subseteq S$ is the initiation set of the option, $\mathbb{G}_f \subseteq S$ is the termination set of the option, and finally $\pi_f$ is the policy corresponding to the option, such that: (a) $\forall s \in  \mathbb{G}_f$, we have $\mathcal{F}(s,f)=1$; (b) if there exists no landmark fact $f' \prec f$, then $ \mathbb{I}_f = \{s_0\}$, otherwise $\forall s \in  \mathbb{I}_f$, there exists a landmark fact $f' \prec f$ such that $\mathcal{F}(s,f')=1$.
\end{definition}

Thus these subgoal skills are meant to drive the system from states that satisfy some previous landmark fact to the next one. However, extracting landmarks and their relative orderings from an MVTR only relaxes the requirement for precise symbolic action models. We still need to address the challenges arising from the fact that there may be many low-level states that satisfy a given landmark fact, but they may not all be equivalent in terms of how easy it is to reach the goal from those states. For example, if one were to learn a skill for the landmark \texttt{door-open}, the simplest policy to learn would be the one where the robot goes to open the door directly (assuming the key pick up has already been completed). Now the skill would be able to successfully open the door, but once the robot has opened the door it would be out of battery and will not be able to perform any other actions. Thus it requires us to not only achieve the subgoal \texttt{door-open} but also do so with a high battery level (thus requiring visiting the charging point before reaching the door). We can't identify this from the high-level symbolic information alone, since it contains no information about the battery level. Instead, in this work, we will try to make up for this lack of information by learning diverse skills that visit diverse low-level states.\footnote{For those familiar with the automated planning literature, the approach of finding diverse set of options is similar, in the case of completely known models, to serializing subgoals by considering a variety of subplans--including non-minimal ones for each subgoal; see \cite{kambhampati1996candidate}}

\noindent\textbf{Learning Diverse Skills Per Subgoal. }  Our approach involves learning a set of options for each landmark fact with a diverse set of termination states with the given fluent true, i.e, for a given landmark $f$, we try to learn a set of skills $\mathbb{O}_f$ (where $|\mathbb{O}_f|$ is set to a predefined count $k$) such that for each skill $o_f^z \in \mathbb{O}_f$, we have a set of \emph{skill terminal states} $\mathbb{G}_f^z$ such that $\forall s \in \mathbb{G}_f^z, \mathcal{F}(s,f)=1$. This means we treat the states that satisfy the landmark $f$ as absorbing subgoal state(s) and end current skill training as soon as it enters such a state. The individual skills are learned in the order specified by the landmark set $\mathcal{L}$ and the subgoal states obtained as part of learning immediately preceding skills are used as initial states for the succeeding skill.

Recall that as fluents do not uniquely define all states that are equally useful in the current task, we will need to learn to reach a diverse set of landmark-satisfying states to ensure we can find the truly useful one(s). Here, we employ an information-theoretic objective to encourage diversity in skill terminal states while still ensuring that the landmark is satisfied. In particular, for a landmark fact $f$, let us use the random variable $Z_f$ to represent the specific skill being followed (where $Z_f$ takes values from $z^1_f$ to $z^k_f$)
and let $G_f$ denote the random variable corresponding to being in one of the possible terminal state(s) that are discovered by all the skills (i.e., $G_f$ can take values from the set $\bigcup_{i=1}^{k} \mathbb{G}_f^i$). Then our objective is to learn a set of $k$ skill policies that achieve the landmark fact $f$, while minimizing the conditional entropy:
\small
\begin{equation}
\label{eq:entropy-objective}
    min~\mathcal{H}(Z_f|G_f)
\end{equation}
\normalsize
The conditional entropy is minimized for landmark-achieving skills when the policies are reaching distinct skill terminal states. Intuitively, the objective is optimized when we can easily infer the index of skill $z^i_f$ by only looking at the low-level landmark state.

The problem of learning the policy for a skill $z^i_f$ for each landmark $f$ is framed as a separate RL problem with a new reward function given as
\small
\begin{equation} 
\label{eq:diverse-reward}
    R_f(s) = 
\begin{dcases}
    \mathcal{R}_{\mathcal{L}} + \alpha_H * R_{d}(s|z^i_f) & \text{if }\mathcal{F}(s,f)=1 \\
    0              & \text{otherwise}
\end{dcases}
\end{equation}
\normalsize
Where, $R_{\mathcal{L}}$ is the reward associated with achieving any landmark state and is usually set to 1, and the diversity reward $R_{d}$ can be computed as: 
\small
\begin{equation} 
\label{eq:diverse-log-prob}
R_{d}(s) = log(p(z^i_f|s)).
\end{equation}
\normalsize
$\alpha_H$ is a hyper-parameter such that $-1 < \alpha_H * R_{d} \leq 0$ (where $R_{d}$ is clipped). This reward design resembles the rewards in some previous Quality-Diversity policy learning approaches \cite{florensa2017stochastic, Eysenbach2019DiversityIA, Achiam2018VariationalOD}. However, ours differs from theirs in the fact that we are using the diversity objectives for a different purpose and our diversity reward is assigned only at skill terminal states. 

This formulation immediately results in two theoretical guarantees: (a) The achievement of subgoal is always prioritized in our reward function. (b) The system will prefer achieving subgoal states that are not visited by other skills.

The formal propositions and proofs can be found in Section \ref{appendix:prop-goal-reachability} and Section \ref{appendix:prop-prefer-unvisit} in Appendix. The two propositions state the fact that our diversity-augmented reward always encourages the skills to cover all reachable subgoal states with distinct state coverage. In the case of the Household environment, the landmark \texttt{door-open} will correspond to two possible reachable states, one where the robot has \texttt{door-open}, while holding the right key and has no charge left and the other one where the door is open, the robot is holding the right key and is charged. So if we have $k \geq 2$, we will have at least one skill where the robot has opened the door with charge.

\noindent \textbf{Calculating Diversity Rewards. } When $z^i_f$ is sampled uniformly at the beginning of each episode, $p(z^i_f|s)$ for a skill terminal state $s$ can be estimated by counting the state visitations as in \cite{florensa2017stochastic}:
\small
\begin{equation} 
\label{eq:discrete-prob-z}
p(z^i_f|s) \simeq \frac{count(s, z^i_f)}{\sum_{z^j_f \in Z_f}{count(s, z^j_f)}}.
\end{equation}
\normalsize

A practical problem we may encounter is how to appropriately set the number of diverse skills to be learned (i.e., the hyper-parameter $k$), which is usually unknown upfront. Here, we propose to gradually increase the value of $k$ until a specified maximum is reached or no new skill terminal state is discovered. However, under this curriculum setting, we can not use Eq. \ref{eq:discrete-prob-z} to estimate $p(z^i_f|s)$ because $z^i_f$ is no longer sampled according to a uniform distribution. Alternatively, we can apply the Bayes theorem and replace the absolute count value with the estimated probability of reaching any skill terminal state $s$ after executing skill $z^i_f$:
\small
\begin{equation} 
\label{eq:state-dist-bayes}
\begin{split}
p(z^i_f|s) & \simeq \frac{p(z^i_f,s) }{p(s)} \\
& = \frac{p(s|z^i_f)p(z^i_f) + \alpha_L}{\sum_{z^j_f \in Z_f}{p(s|z^j_f)p(z^j_f)} + |Z_f| * \alpha_L},
\end{split}
\end{equation}
\normalsize
where $\alpha_L$ is the Laplace smoothing factor, and the prior $p(z)$ can be computed from the total number of rollouts and the total number of $z^i_f$ being executed, and $p(s|z^i_f)$ can be estimated by sampling $N$ traces and counting the terminal state visitations:
\small
\begin{equation} 
\label{eq:estimate-terminal-dist}
p(s|z^i_f) \simeq \frac{count(s, z^i_f)}{\sum_{s' \in \mathbb{G}_f}{count(s', z^i_f)}}
\end{equation}
\normalsize

Additional discussion on how to compute the diversity rewards in continuous domains can be found in Appendix \ref{appendix:clustering}.\looseness=-1

\noindent \textbf{Training the Meta Controller. }  Since multiple skills are learned, we will need a meta-controller that learns to select the skill  $o_f^z$ to execute for each landmark subgoal and to achieve the original task reward. In particular, our meta-controller RL problem consists of state space $S_{meta}$. The action space consists of the diverse skill set corresponding to the landmarks $\mathcal{L}$ (represented as $\mathbb{O}_\mathcal{L}$). The reward function $\mathcal{R}_{meta}$ is a sparse binary reward that has 0 on all states except the ones that satisfy the final goal facts. Numerically, $\mathcal{R}_{meta}$ is identical to a binary success-indicating environment reward. 

Note that our meta controller is not restricted to any specific type of meta-state representation. Here we list two possible options: (a) we can define $S_{meta}$ as the sequence of executed skills (henceforth referred to as the history-based representation); (b) we can use low-level MDP states as $S_{meta}$ (i.e., initial states and landmark-satisfying states). Additional discussion on why the history-based representation is a sufficient and more compact representation can be found in Appendix \ref{appendix:meta-state}. Our meta-controller follows standard Q-Learning, and it is trained together with low-level skills. Algorithm \ref{algo:meta-controller} in Appendix provides the pseudo code for our learning method. The algorithm starts by sampling a possible linearization for the given set of landmarks. Then for each landmark fact in the sequence, one of the diverse skills is selected by the meta-controller according to $\epsilon$-greedy. We start with uniform skill selection ($\epsilon_0=1$) and slowly anneal the exploration probability by a fixed factor after a pre-specified training step.

\section{Evaluation}
\label{sec:evaluation}

\begin{figure}
    \centering
    \subfloat[\centering MineCraft]{{\includegraphics[width=0.24\textwidth]{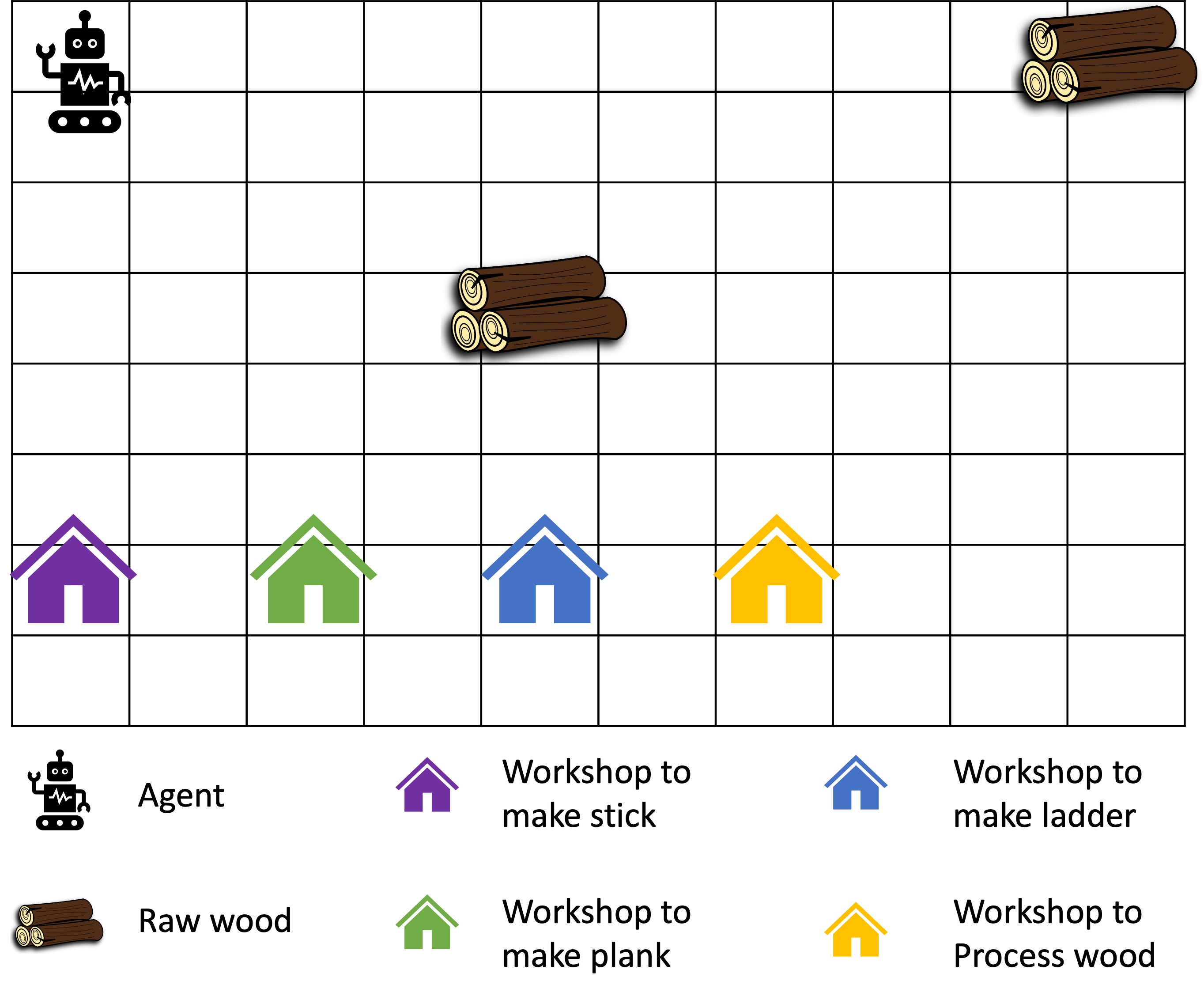}} \label{fig:minecraft-domain}}
    \subfloat[\centering  Mario]{{\includegraphics[width=0.24\textwidth]{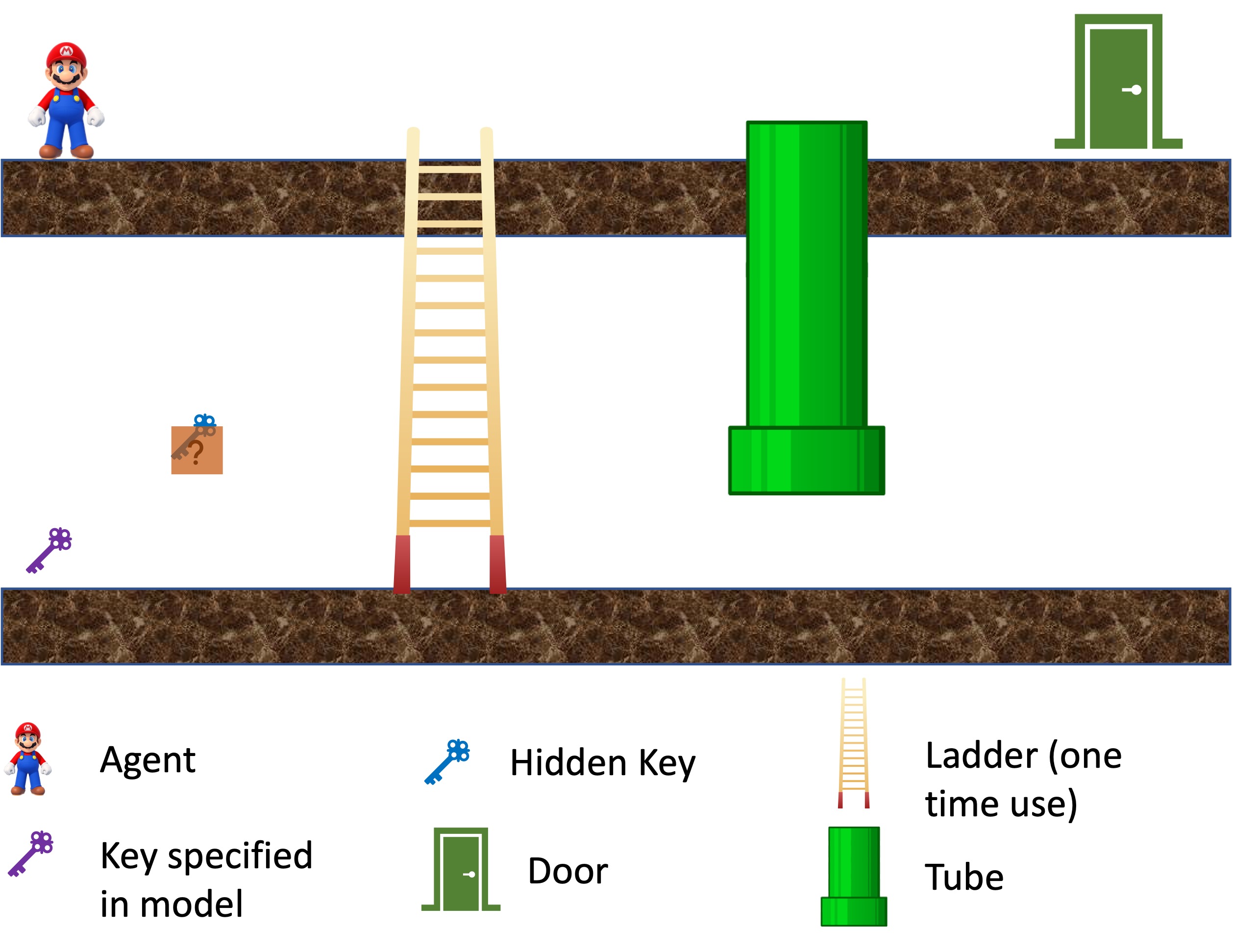}} \label{fig:mario-domain}}
    \caption{Visualization of the MineCraft environment and the Mario environment.}
    \label{fig:domain-images}
\end{figure}

\begin{figure*}[t]
    \centering
    \subfloat[\centering Average evaluation success rates ]{{\includegraphics[width=0.25\textwidth]{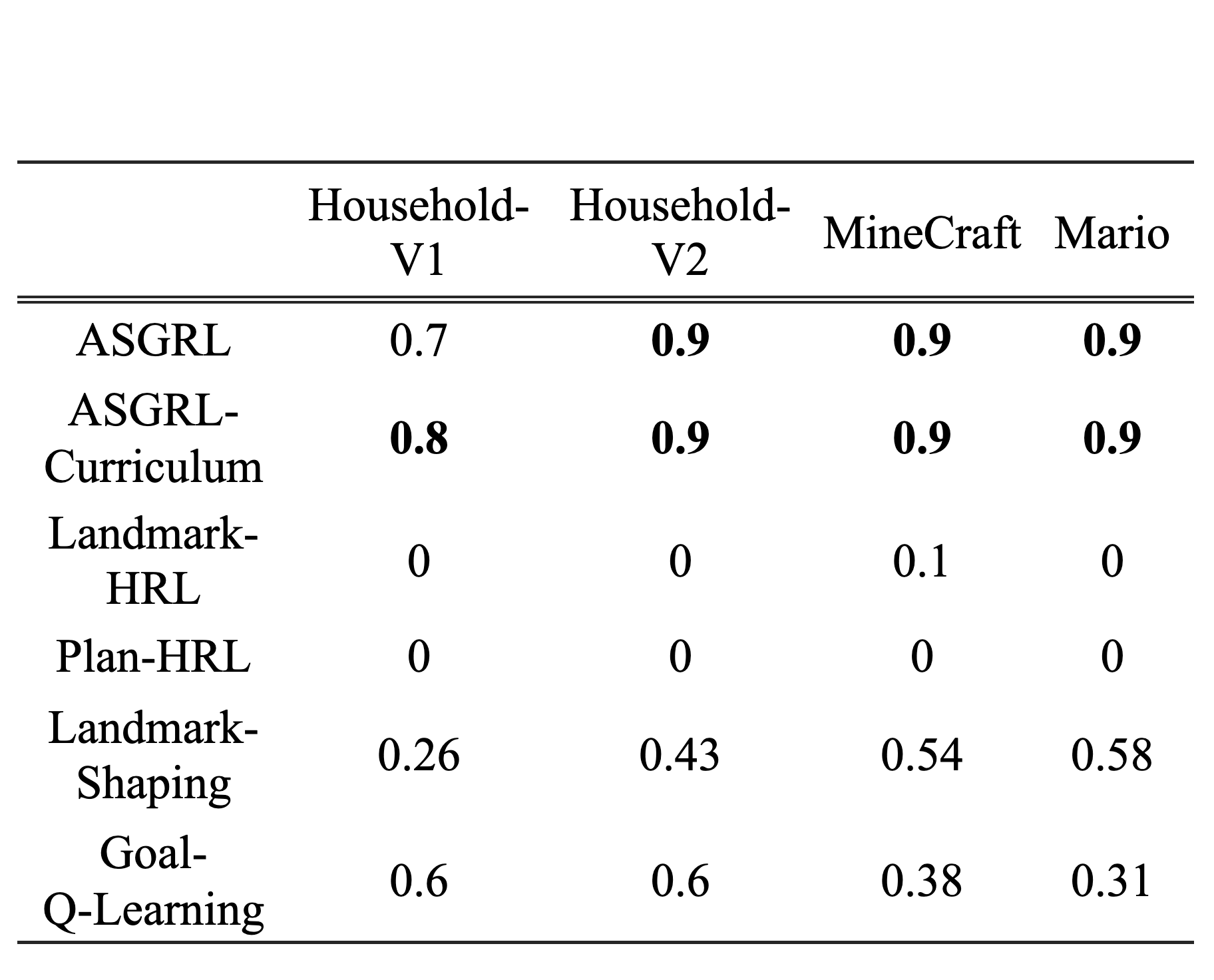}} \label{fig:success-rate}}
    \subfloat[\centering Smoothed learning curves]{{\includegraphics[width=0.7\textwidth]{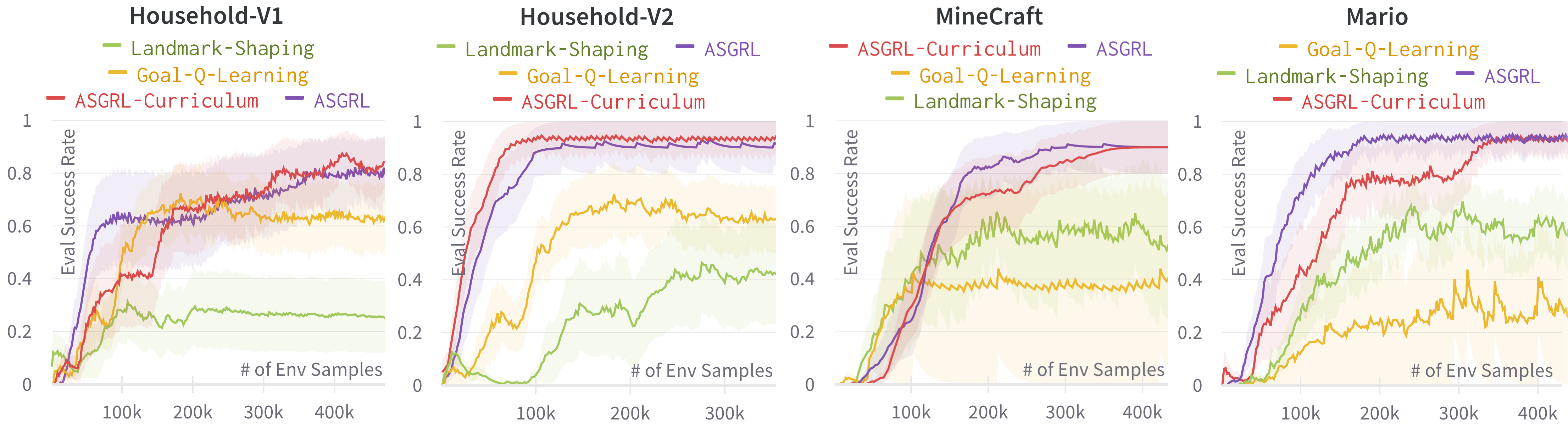}}\label{fig:results}}
    \caption{Comparisons of \KGRL with other baselines. The solid lines in the right figure show the mean score over 10 random seeds. The shaded regions represent the standard error of the mean.}
    \label{fig:rate-results}
\end{figure*}

We evaluate the performance of our approach in three environments: the Household environment, a MineCraft environment, and a Mario game environment. These three environments all require long-horizon sparse-reward task learning. We aim to answer the following questions in evaluation: firstly, whether our approach can extract useful task information from approximate MVTR domain models; secondly, whether the learning agents can leverage the extracted information and the diversity objective to efficiently find goal-reaching policies.

The performance metrics we consider include the success rate of solving the given task and the sample efficiency. The final success rate was obtained by running the learned policies with a low exploration probability 10 times and counting the cases that the agent succeeds to reach the final goal state(s) within certain environment steps. To track the sample efficiency, we evaluated the agents every 5 training episodes, during which the meta-controller and each skill policy act greedily. Each algorithm was run 10 times with different random seeds and the average results are reported.

\subsection{Baselines and Implementations}

We compare our approach to the following baselines:

\noindent \textbf{Plan-HRL:} this baseline follows the implementation of existing symbolic plan guided RL approaches like TaskRL \cite{IllanesYIM20Symbolic} and PEORL \cite{yang2018peorl}. Plan-HRL learns separate RL policy for each symbolic operator and uses hierarchical reinforcement learning to reach the goal by following plans from the (incomplete) symbolic models.

\noindent \textbf{Landmark-HRL:} this baseline uses the same meta-controller and low-level RL agents as ours. However, it doesn't have the diversity objective (that is, it only uses $R_{\mathcal{L}}$ as rewards) and it only learns one single policy for each landmark subgoal. 

\noindent \textbf{Landmark-Shaping:} this baseline follows the idea of plan-based reward shaping \cite{grzes2008plan}. Rather than following any plan from error-prone symbolic models,  Landmark-Shaping uses potential-based shaping rewards as a heuristic to softly guide the RL agent towards the final goal. The state potential is given as the number of landmark fluents that have been satisfied by the agent. 

\noindent \textbf{Goal-Q-Learning:} this baseline uses standard Q-Learning to learn from a sparse binary final-goal-reaching reward. 

Recall that there could be two versions of our approach, namely the one with standard learning setting (denoted as \textbf{\KGRLAlgo}) and the one with curriculum setting (denoted as \textbf{\KGRLAlgo-Curriculum}). For evaluation purposes, we set a large enough $k$ for each skill to cover all possible landmark states, though it's more practical to employ the curriculum setting all the time for any unknown skill. In the experiments, the history-based meta-state representation is used in \KGRL. The implementation details of our approach and the baselines can be found in Appendix \ref{appendix:implementation-details}.

\subsection{Environments and Results}
\noindent \textbf{The Household environment (Fig. \ref{fig:dock-env}). } 
We consider two possible versions of incomplete symbolic models for the Household domain. In the first version, the human expert knows about the existence of the charging dock, but he/she is unaware of the fact that there are multiple different keys in the house and believed that the robot can open the door with any key. We denote the learning task using knowledge from this version of symbolic model as Household-V1. Hence, Landmark-HRL fails as it only learns to pick up the nearest key. Also, Plan-HRL never succeeds as the action $\texttt{pass\_through\_door}$ is not executable.

In other cases, the human may not even know the need for recharging and the fact that the door is locked. So the human might provide a smaller symbolic model with only action $\texttt{go\_to\_destination}$ and $\texttt{pass\_through\_door}$ being described. We denote this learning task as Household-V2. In this case, Landmark-HRL fails because it only learns to open the door without bothering to recharge the robot. Plan-HRL also fails since the operator $\texttt{pass\_through\_door}$ is not executable.

The approximate symbolic models and extracted landmarks are presented in Appendix \ref{appendix:model-dock-v1} and \ref{appendix:model-dock-v2}.

\noindent \textbf{The MineCraft environment (Fig. \ref{fig:minecraft-domain}). } This is a variation of the environment used in \cite{andreas2017modular}. Here the agent can navigate around the environment and collect raw materials to build tools. We consider the task of making ladder from plank and stick. To accomplish this task, the agent needs to collect raw wood, bring it to workshop 1 to get processed wood, and then make a plank and a stick, which can then be used to build a ladder at different workshops. Our version of the task differs from the original one in the size of the ladder to build. In this case, the agent has to collect multiple pieces of wood. However, the human expert doesn't know this additional requirement and gives an inaccurate symbolic model (see Appendix \ref{appendix:model-minecraft} for the approximate model and extracted landmarks) in which the actions $\texttt{make\_plank}$ and $\texttt{make\_stick}$ only require the agent to bring one piece of wood to workshop 1 (i.e., $\texttt{wood-processed}$ is True). 

In this case, Plan-HRL and Landmark-HRL learn a policy for the operation $\texttt{get\_processed\_wood}$ that myopically collects only one piece of raw wood and heads to workshop 1 in a shortest path. In contrast, the diversity objective in our approach gives extra incentive to the agent for visiting the wood processing workshop with different numbers of raw pieces of wood. 

\noindent \textbf{The Mario environment (Fig. \ref{fig:mario-domain}). } This environment is a modified version of the well-known Atari game Montezuma's Revenge. The task for the Mario agent is to open the door by going downstairs, picking up the two keys (one hidden in the red rock), and going upstairs. This Mario environment differs from Montezuma's Revenge in the following three aspects: firstly, to open the door, the agent needs to pick up both keys; secondly, the ladder here is already worn out, so it will break after being used once; thirdly, Mario can only go down through the tube, not up. Hence, the optimal plan is to go down through the tube and go up through the ladder. However, an expert in Montezuma's Revenge might be unaware of all the facts above and give an inaccurate symbolic model that (a) has no fluents associated with the tube, (b) has no fluents associated with the hidden key, (c) has no fluents indicating whether the ladder is broken, and (d) contains action $\texttt{go\_up\_the\_ladder}$ that is not executable after the ladder is broken. The approximate symbolic model can be found in Appendix \ref{appendix:model-mario}.

Our approach is able to successfully learn diverse skills that go downstairs by the ladder or the tube, and skills that pick up one or both keys. In contrast, Plan-HRL fails because it only tries to go down by the ladder, and Landmark-HRL fails because it learns the easiest way to go downstairs (by the ladder as it is closer) and never takes extra effort to pick up the hidden key. Again, Goal-Q-Learning and Landmark-Shaping fail due to the difficulty of this long-horizon task. 

\noindent \textbf{Summary of main experiments. } The results are summarized in Fig. \ref{fig:rate-results}. We can see that even with incomplete and inaccurate symbolic models our approach can still solve all the tasks with high success rates, while all other baselines fail. This highlights the importance of accommodating for incompleteness (captured via MVTRs) in the symbolic model and incorporating the diversity objective in low-level skill learning. The failure of Goal-Q-Learning and Landmark-Shaping further confirms the necessity of developing approaches like \KGRL that could conserve and leverage task-hierarchy knowledge from incomplete symbolic models.

\subsection{Additional Experiments}
We conducted three additional experiments to get more insights into \KGRL (Appendix \ref{appendix:result-complementary}): (a) In Appendix \ref{appendix:clustering}, we present a simple clustering-based approach to adapt \KGRL to continuous domains. We evaluate it in a Pixel-Mario environment, in which images are used as MDP states. Results suggest that the clustering-based approach can effectively scale \KGRL to problems with large continuous state spaces. (b) We investigated whether \KGRL can work efficiently when an accurate and ``complete" symbolic model is given. Results show that \KGRL is a general approach that can be applied to both incomplete models and ``complete" models. (c) We also investigated whether different meta-state representations yield significantly different performance. Results confirm that \KGRL is not restricted to any particular meta-state design.

\section{Conclusion}
\label{sec:discussion}
Giving high-level human advice to RL systems is an effective way of scaling them and aligning them to human preferences. In this paper, we present \KGRLFull~ an RL framework capable of leveraging incorrect and incomplete symbolic models to solve long-horizon sparse reward goal-directed tasks. We saw how landmarks provide robust task-decomposition information under minimal assumptions and how diversity at low-level skills can help make up for missing information at the symbolic level. Our experiments show the effectiveness of our method on several domains. 

Going forward there are several exciting directions for the work. One would be to further relax the MVTR condition, in such cases, landmarks are still helpful information, but one may no longer be able to provide any of the guarantees afforded by the current assumption. Currently, we allow any low-level state features to contribute to the diversity objective, so learning diverse skills can be intractable or expensive when the state space is extremely large and there may be multiple trivial low-level state features. In the future, we would like to focus on how one could further reduce the hypothesis space of useful low-level landmark states. 

\subsection*{Acknowledgments}
This research is supported in part by ONR grants N00014- 16-1-2892, N00014-18-1- 2442, N00014-18-1-2840, N00014-9-1-2119, AFOSR grant FA9550-18-1-0067, DARPA SAIL-ON grant W911NF19-2-0006 and a JP Morgan AI Faculty Research grant.

\bibliography{bib}
\bibliographystyle{icml2022}

\clearpage
\appendix
\section{Generality of MVTR}
\label{appendix:general-mvtr}

To see the generality of the MVTR condition, let us look at a very commonly considered mapping to symbolic models, namely one based on state aggregation based abstractions.

\begin{proposition}
For a given MDP $\mathcal{M}=\langle S,A,R,T,s_0\rangle$ with goal state set $S_\mathcal{G}$, let $F$ be a set of binary features that defines a set of symbolic state $S^F = 2^F$. Consider a surjective mapping $\phi$ between $S$ and $S^F$, such that there exists a symbolic state set $S^F_G \subseteq S^F$, such that $\not\exists s \in (S\setminus S_\mathcal{G}), \phi(s)\in S^F_G$. Let $\mathcal{M}^\phi$ be an abstraction of  $\mathcal{M}$ defined using $\phi$ per \Shortcite{LiWL06Towards}. Then the determinization of $\mathcal{M}^\phi$ is a minimally viable symbolic model for $\mathcal{M}$.
\end{proposition}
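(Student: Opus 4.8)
The plan is to verify the two defining conditions of an MVTR (Definition~\ref{def:min_viable}) directly for the determinized model $\mathcal{P}$ obtained from $\mathcal{M}^\phi$. First I would fix the bookkeeping that links the two representations: since each abstract state in $S^F=2^F$ is itself a set of fluents, the natural fluent map is $\mathcal{F}(s,f)=1 \iff f\in\phi(s)$, so that the symbolic image $\phi(s)$ is exactly the set of fluents true in $s$. I would take the determinization's goal states to be the abstract goal set $S^F_G$, identified with the image $\phi(S_\mathcal{G})$, and let $G\subseteq F$ be the goal fluents characterizing it (so $S^F_G=\{s^F\in S^F : G\subseteq s^F\}$). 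Because condition~2 of an MVTR already presupposes a goal-reaching policy with non-zero probability, I would also make explicit the (necessary) standing assumption that $S_\mathcal{G}$ is reachable in $\mathcal{M}$; without it no model can be an MVTR and the statement is vacuous.

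Condition~1 is then immediate. Set $\hat{S}_\mathcal{G}=\phi^{-1}(S^F_G)$. The hypothesis $\not\exists s\in(S\setminus S_\mathcal{G}),\,\phi(s)\in S^F_G$ says precisely that $\hat{S}_\mathcal{G}\subseteq S_\mathcal{G}$ (and with $S^F_G=\phi(S_\mathcal{G})$ one in fact gets $\hat{S}_\mathcal{G}=S_\mathcal{G}$). For part (a), any $s\in\hat{S}_\mathcal{G}$ satisfies $\phi(s)\in S^F_G$, hence $G\subseteq\phi(s)$, so $\mathcal{F}(s,f_g)=1$ for every goal fluent $f_g\in G$. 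For part (b), any $\hat{s}\notin\hat{S}_\mathcal{G}$ has $\phi(\hat{s})\notin S^F_G$, hence $G\not\subseteq\phi(\hat{s})$, so some $f_g\in G$ has $\mathcal{F}(\hat{s},f_g)=0$. This is exactly the content of the second disjunct of condition~1.

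The substance of the proof is condition~2, and this is where I expect the main obstacle. I would take a concrete goal-reaching trace $\tau=\langle s_0,a_1,s_1,\dots,s_g\rangle$ of positive probability with $s_g\in S_\mathcal{G}$ (guaranteed by reachability, and with $\hat{S}_\mathcal{G}=S_\mathcal{G}$ its terminal state maps into $S^F_G$). Projecting through $\phi$ yields the abstract sequence $\langle\phi(s_0),\dots,\phi(s_g)\rangle$ starting at $I=\phi(s_0)$ and ending in $S^F_G$. The key step is to invoke the abstraction semantics of \Shortcite{LiWL06Towards}: each positive-probability concrete transition $s_i\to s_{i+1}$ induces a positive-probability abstract transition $\phi(s_i)\to\phi(s_{i+1})$ in $\mathcal{M}^\phi$, so its all-outcomes determinization contains a deterministic action realizing exactly that transition, whose precondition is met since the plan sits in the source abstract state by construction. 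After deleting stutter steps where $\phi(s_i)=\phi(s_{i+1})$ (which introduce no new fluents and hence cannot affect any ordering), the remaining sequence of distinct abstract states is the state sequence $\tau^\mathcal{P}$ of a valid symbolic plan for $\mathcal{P}$. Finally, because $\mathcal{F}=\phi$, the fluents true along $\tau$ are exactly the sets $\phi(s_i)$, so $\tau^\mathcal{P}$ is literally the sequence of fluent sets realized by $\tau$; thus for every ordered pair $f_i\prec f_j$ established by $\tau^\mathcal{P}$ the corresponding first-appearances occur in the same order in $\tau$, making $\tau$ an instantiation of the plan. The delicate points are (i) correctly invoking which flavour of the Li--Walsh--Littman abstraction preserves reachable transitions through to the determinization, and (ii) the careful matching between the low-level trace and the stutter-collapsed symbolic state sequence so that the fluent-ordering clause of the instantiation definition is met; condition~1 and the overall skeleton are routine by comparison.
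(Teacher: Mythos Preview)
Your proposal is correct and follows the same route as the paper: project a goal-reaching trace through $\phi$, note that the Li--Walsh--Littman abstraction preserves each positive-probability transition so that the all-outcomes determinization contains a matching deterministic action, and conclude that the resulting symbolic plan has the original trace as an instantiation. The paper's own argument is a two-sentence sketch (``state abstraction conserves all traces and one could create an all-outcome determinization which again creates a deterministic model that conserves the model''); your write-up simply unpacks this, making explicit the goal-preservation check for condition~1, the reachability assumption, the stutter-collapse step, and the verification that first appearances of fluents line up---all of which the paper leaves implicit.
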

This comes from the fact that the state abstraction conserves all traces and one could create an all-outcome determinization which again creates a deterministic model that conserves the model. Therefore the all outcome determinization of the abstract model will be a model that meets MVTR condition. It shows that a direct goal conserving abstraction already results in minimally viable symbolic models without placing additional requirements like conserving optimal Q-values/optimal policy action, or even that aggregated states share the same immediate reward and place restrictions on transitions possible in the abstract model \cite{LiWL06Towards}, requirements that are sometimes expected by other works leveraging symbolic abstractions of the task (c.f. \cite{kokel2021reprel}). Moreover, an MVTR doesn't even require the symbolic models to be valid abstractions of the underlying task in that, the symbolic model may contain features or actions that do not correspond to any low-level state or transition possible in the underlying MDP.

\begin{algorithm}[tb]
   \caption{Training Meta-Controller}
   \label{algo:meta-controller}
\begin{algorithmic}
   \STATE {\bfseries Input:} $\mathcal{L}$ (landmarks), $k$ (the number of diverse skills to learn for each landmark)
   \STATE Initialize $Q$-values of the meta-controller
   \REPEAT
       \STATE $\tau_{\mathcal{L}} \sim \mathcal{L}$ 
       \STATE $s_{meta}=\langle\rangle$
       \FOR{each landmark $f$ in $\tau_{\mathcal{L}}$}
            \STATE Select skill $o_f^i$ by using $\epsilon$-greedy on $Q(s_{meta},\mathcal{O}_f)$.
            \STATE Sample a trajectory $\tau$ with $o_f^i$, that either results in a landmark-satisfying state for $f$ or fails to achieve the sub-goal in max time step.
           
            \IF{finished with failure}
                \STATE Q($s_{meta}$, $o_f^i$) $\leftarrow$ $0$
                \STATE Update low-level skill policy for $o_f^i$ with a sparse reward of 0. 
                \STATE Terminate and restart from initial state(s).
            \ELSE
                \STATE Q($s_{meta}$, $o_f^i$) $\leftarrow$ $R_{meta}(s_{meta}$, $o_f^i)$ + $max_{f',j}$ Q([$s_{meta}$, $o_f^i$], $o_{f'}^j$).
                \STATE Update $o_f^i$'s state visitation counts.
                \STATE Update low-level policy for $o_f^i$ with a sparse reward of $1 + \alpha * R_{d}$.
            \ENDIF
            \STATE $s_{meta}\leftarrow$ [$s_{meta}$, $o_f^i$].
        \ENDFOR
    \UNTIL{Learning completes}
\end{algorithmic}
\end{algorithm}

\section{Proposition \ref{prop:goal_reachability} and the Proof}
\label{appendix:prop-goal-reachability}
\begin{proposition}
\label{prop:goal_reachability}
Given the reward defined in Eq. \ref{eq:diverse-reward} and sufficient exploration, each skill $o_f^z$ is guaranteed to learn a policy that has a non-zero probability of reaching a skill terminal state from some of the states in the initiation set.
\end{proposition}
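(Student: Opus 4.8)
The plan is to exploit the very specific structure of the reward in Eq.~\ref{eq:diverse-reward}, namely that it is exactly $0$ everywhere except at skill terminal states where it is strictly positive, together with a reachability guarantee inherited from the MVTR condition. The first step is to observe that at any state $s$ with $\mathcal{F}(s,f)=1$ the reward equals $\mathcal{R}_{\mathcal{L}} + \alpha_H * R_d(s\mid z^i_f)$, and since $\mathcal{R}_{\mathcal{L}}=1$ while the clipping constraint forces $\alpha_H * R_d > -1$, this quantity lies in $(0,1]$ and is therefore strictly positive; everywhere else the reward is $0$. Consequently, for any skill policy $\pi_f$ and any initiation state $s$, the value $V^{\pi_f}(s)$ is non-negative and is strictly positive \emph{if and only if} $\pi_f$ reaches a terminal state from $s$ with non-zero probability, because a terminal state reached in finite time with positive probability contributes a strictly positive expected discounted return and there is no other (negative) source of reward that could cancel it. This is precisely the ``subgoal achievement is always prioritized'' guarantee noted after the reward definition.

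Second, I would establish that a terminal state is in fact reachable from at least one state in $\mathbb{I}_f$. This is where Definition~\ref{def:min_viable} (MVTR) and the landmark-ordering proposition do the work: MVTR guarantees a goal-reaching trace $\tau$ that instantiates a symbolic plan, so the landmark ordering $f' \prec_{\mathcal{L}} f$ is respected along $\tau$. Hence $\tau$ contains a state satisfying the immediately preceding landmark $f'$ — which by the subgoal-skill definition lies in $\mathbb{I}_f$ (and for the first landmark, $\mathbb{I}_f=\{s_0\}$, which trivially lies on $\tau$) — followed later by a state satisfying $f$, i.e.\ a terminal state. The sub-trajectory of $\tau$ between these two states witnesses a positive-probability path from an initiation state to a terminal state in the task MDP.

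Finally, I would invoke standard tabular RL convergence under sufficient exploration. Since each skill is trained as an ordinary discounted MDP with the reward of Eq.~\ref{eq:diverse-reward}, and ``sufficient exploration'' ensures every state--action pair is visited infinitely often with an appropriately decaying learning rate, $Q$-learning converges to the optimal value function; the greedy policy it induces attains $V^*(s) \geq V^{\pi_f}(s) > 0$ at the on-trace initiation state $s$ identified in the second step. By the equivalence established in the first step, a strictly positive value at $s$ is the same as reaching a terminal state from $s$ with non-zero probability, which is exactly the claim.

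The main obstacle I anticipate is the second step — cleanly transferring the \emph{global} reachability promised by MVTR (which concerns a full goal-reaching trace) into \emph{local} reachability of a single landmark's terminal state from its initiation set. The delicacy is that $\mathbb{I}_f$ is not fixed a priori but is populated by the terminal states the preceding skill happens to discover, so the argument must ensure that at least one ``on-trace'' predecessor state actually ends up in $\mathbb{I}_f$. Handling this rigorously requires either a short induction over the landmark order (the base case being $\mathbb{I}_f=\{s_0\}$) or an explicit appeal to the instantiation definition, which guarantees that the intermediate landmark-satisfying states appear, in the correct order, along $\tau$.
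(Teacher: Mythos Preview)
Your proposal is correct and follows essentially the same route as the paper's own proof sketch: both hinge on (i) the reward being strictly positive at skill terminal states (via the clipping constraint $-1 < \alpha_H R_d \le 0$) and identically zero elsewhere, so that any terminal-reaching trace dominates any non-reaching one, and (ii) MVTR guaranteeing that such a reachable terminal state exists from the initiation set. Your write-up is actually more careful than the paper's sketch---you make explicit the $Q$-learning convergence step and flag the inductive subtlety about $\mathbb{I}_f$ being populated by the preceding skill's discovered terminal states, whereas the paper simply asserts the existence of the required trace and compares per-trace returns directly.
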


\begin{proof}[Proof Sketch]
Recall that when an MVTR is given, there must exist a trace starting from some state in the initiation set that can end in a skill terminal state. Let $S$ denote the entire state space, $\mathbb{G}_f^{*}$ denote the space of skill terminal states for landmark $f$, we can show that any trace $\tau_0$ that ends in a skill terminal state will always have a greater discounted cumulative reward than any trace $\tau_0$ that never visits a terminal state:
\small
\begin{equation}
\begin{split}
V(\tau_0) & = \lambda^{^{T_0}}R_f(s_{T_0} \in \mathbb{G}_f^{*}) + \sum_{t=0}^{T_0-1} \lambda^t R_f(s_t \in S \setminus \mathbb{G}_f^{*}) \\
& = \lambda^{^{T_0}}R_f(s_{T_0} \in \mathbb{G}_f^{*}) \\
& > 0\\
& = V(\tau_1) = \sum_{t=0}^{T_1} \lambda^t R_f(s \in S \setminus \mathbb{G}_f^{*})
\end{split}
\end{equation}
\normalsize
\end{proof}

\section{Proposition \ref{prop:prefer-unvisit} and the Proof}
\label{appendix:prop-prefer-unvisit}
\begin{proposition}
\label{prop:prefer-unvisit}
Let $\mathbb{O}_f$, be a set of learned skills such that, there exist two skills $o_f^i$ and $o_f^j$, that share some reachable termination states ($\mathbb{G}^i_f\cap\mathbb{G}^j_f\neq\emptyset$). Consider a new skill set $\hat{\mathbb{O}}_f$, which is formed by only replacing $o_f^i$ with a new skill $\hat{o}_f^i$ such that $\hat{o}_f^i$ only reaches goals that are not achieved by skills in $\{\mathbb{O}_f-o_f^i\}$, then it is guaranteed that (a) values of the policies of the $i^{th}$ skill and the $j^{th}$ skill will be greater in $\hat{\mathbb{O}}_f$ and (b) the values for all the other skills will be equal or greater in $\hat{\mathbb{O}}_f$.
\end{proposition}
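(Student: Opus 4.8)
The plan is to reduce the whole statement to a single monotonicity property of the diversity reward $R_d$ and then track, state by state, how the set of skills reaching each terminal state changes under the replacement. The crucial preliminary step is to pin down how a skill's value depends on its terminal reward. By Proposition \ref{prop:goal_reachability} together with the reward of Eq. \ref{eq:diverse-reward}, with sufficient exploration each skill's policy reaches some terminal state in $\mathbb{G}_f$ and collects nonzero reward only there; hence the value of skill $o_f^z$ is governed by $R_f(s) = \mathcal{R}_{\mathcal{L}} + \alpha_H R_d(s\mid z)$ evaluated at the terminal state(s) it reaches. Using the count-based estimate of Eq. \ref{eq:discrete-prob-z}, $R_d(s\mid z) = \log p(z\mid s) = \log\frac{count(s,z)}{\sum_{z'} count(s,z')}$. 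This yields the key monotonicity facts I will rely on: (i) if $s$ is reached by $z$ alone then $p(z\mid s)=1$, so $R_d=0$ and $R_f(s)=\mathcal{R}_{\mathcal{L}}$ is the globally maximal terminal reward; (ii) if $s$ is shared then $p(z\mid s)<1$, so $R_d<0$ and $R_f(s)<\mathcal{R}_{\mathcal{L}}$ (recall $\alpha_H>0$ and $R_d\le 0$); and (iii) removing a competing skill's positive count from a state $s$ strictly decreases the denominator while fixing the numerator for any other skill still reaching $s$, hence strictly increases that skill's $R_d(s\mid\cdot)$.

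Next I would analyze the three groups of skills separately, using the fact that $o_f^j$ and every other $o_f^k$ with $k\neq i$ are literally unchanged between $\mathbb{O}_f$ and $\hat{\mathbb{O}}_f$: they follow identical trajectories to identical terminal states, so the discount factor $\lambda^{T}$ is unchanged and only the reward magnitude collected there can move. For skill $j$: by hypothesis it shares a terminal state $s^*\in\mathbb{G}^i_f\cap\mathbb{G}^j_f$ with the old $o_f^i$; since $s^*\in\bigcup_{k\neq i}\mathbb{G}^k_f$, the replacement $\hat{o}_f^i$ avoids $s^*$, so fact (iii) gives a strict increase of $R_d(s^*\mid j)$ and thus of $j$'s value. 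For any other skill $k$: the new skill reaches only states outside $\bigcup_{k'\neq i}\mathbb{G}^{k'}_f$, so it never adds to the denominator at any state $k$ reaches, while dropping the old $o_f^i$ can only decrease such denominators; hence $R_d(s\mid k)$ is non-decreasing at every terminal state of $k$, which is exactly part (b). Finally, for skill $i$: the old $o_f^i$ reached at least one shared state, where by fact (ii) $R_f<\mathcal{R}_{\mathcal{L}}$, whereas $\hat{o}_f^i$ reaches only unshared states, each attaining the maximal terminal reward $\mathcal{R}_{\mathcal{L}}$ by fact (i); this raises its value, completing part (a).

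The main obstacle is the discount factor in the comparison for skill $i$. For the unchanged skills $j$ and $k$ the argument is airtight, because trajectories and therefore the discounting are identical and only the terminal reward magnitude shifts, so the inequalities transfer directly. The delicate case is $i$, where the new skill reaches a different terminal state, possibly at a different horizon, so a purely reward-based comparison must be coupled with the framing (implicit in the statement and consistent with the ``sufficient exploration'' premise of Proposition \ref{prop:goal_reachability}) that we compare the best attainable value toward the reward-maximizing terminal state. Here I would argue that an unshared terminal state yields the globally maximal terminal reward $\mathcal{R}_{\mathcal{L}}$, hence the maximal attainable skill value, so no shared-state skill can exceed it; I would make this framing explicit so that discounting cannot silently undercut the strict-increase claim for $i$.
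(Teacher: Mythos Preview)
Your proposal is correct and follows essentially the same approach as the paper: decompose the value into the landmark-reaching component and the diversity component, observe that removing $o_f^i$ from shared terminal states strictly raises $p(z_j\mid s)$ (hence $R_d$) for skill $j$ and weakly for all other unchanged skills, and that $\hat{o}_f^i$ attains $p(\hat{z}_i\mid s)=1$ at its new unshared terminals. You are in fact more careful than the paper's sketch on one point: the paper asserts $V^{\pi^i_f}_{R_\mathcal{L}} = V^{\hat{\pi}^i_f}_{R_\mathcal{L}}$ without addressing that the new skill may reach its terminal at a different horizon, whereas you explicitly flag this discount-factor subtlety and state the framing needed to close it.
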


\begin{proof}[Proof Sketch]
Recall that the sparse reward function we use contains two components, namely a goal-reaching component and a diversity component. In this case the value function for a skill can be decomposed into two components, i.e., $V^\pi(s) = V^\pi(s)_{R_{\mathcal{L}}} + V^\pi(s)_{R_d}$, where $V^\pi(s)_{R}$ is the value component associated with $R_{\mathcal{L}}$ and $V^\pi(s)_{R_d}$ be the value component associated with $R_d$.

According to proposition \ref{prop:goal_reachability}, when the skills are optimized, they always learn policies that reach at least one terminal state. Therefore, the value component corresponding to achieving the landmark is identical between skills in $\mathbb{O}_f$ and skills in $\hat{\mathbb{O}}_f$, i.e., $V^{\pi^i_f}_{R_\mathcal{L}} = V^{\hat{\pi}^i_f}_{R_\mathcal{L}}$. Hence, the factor affecting the values of the skills is $R_d$.

The proof of part (a) directly follows from the fact there exists an $s \in \mathbb{G}^i_f\cap\mathbb{G}^j_f\neq\emptyset$ such that $P(z_j|s) < 1$. After the $i^{th}$ skill is replaced, the likelihood of state $s$ being reached by a skill is distributed across the other skills that have $s$ in it's terminal state, so for any skill $o_f^j$ with $P(z_j|s) > 0$, we have $p(\hat{z_j}|s) > p(z_j|s)$ under the new skill set $\hat{\mathbb{O}}_f$. As the $V^{\pi^j_f}_{R_\mathcal{L}}$ remains unchanged, $V^{\pi^j_f}$ must be larger. 

Similarly, since the $i^{th}$ skill is now visiting previously unvisited states $\hat{S}$ ($\forall s \in \hat{S}, P(\hat{z_i}|s)=1$) and has the same value for $V^{\hat{\pi}^i_f}_{R_\mathcal{L}}$, we are guaranteed that $V^{\hat{\pi}^i_f}$ becomes larger. 

For part (b), the introduction of the new $\hat{o}_f^i$ can only reduce the number of states that are visited by multiple skills, hence the value either increases or stays the same.
\end{proof}

\section{Calculating Diversity Rewards in Continuous State Space} 
\label{appendix:clustering}
When the state space is continuous or high dimensional, it will be intractable to directly compute Eq.\ref{eq:discrete-prob-z} or Eq.\ref{eq:state-dist-bayes}. Previous works have addressed this challenge by fitting a regressor (e.g., that is parameterized by deep neural networks) by maximum likelihood estimation \cite{florensa2017stochastic, Achiam2018VariationalOD, Kumar2020OneSI, Eysenbach2019DiversityIA}. However, this is not a feasible solution in our case, because training a regressor requires a balanced set of training samples in the form of $(s, z_f)$ pairs. Due to the randomization in RL and the fact that some skill terminal states require less exploration to be reached, our system tends to first learn skills that go to some easier-to-reach states, which will result in an imbalanced dataset during the skill learning process. To this end, we use K-Means clustering algorithm to map continuous states into a discrete space represented by the cluster index. 

Since the clustering parameters should be dynamically updated as the agent explores to unseen landmark states, we use a buffer to store $M$ recently visited states, so that whenever the clustering parameters are updated, we can relabel all the data in the buffer and use them to update the state visitation counters. The number of target clusters is set to the number of diverse skills to be learned. By default, we update the clustering parameters whenever a new added state has a greater distance to the assigned centroid than any other state in the same cluster. Note that clustering algorithms like K-Means rely on a good distance metric to work well. In our implementation, we simply use visual distance (i.e., the Euclidean distance in pixel space) as the metric, although we agree that other advanced distance metrics could lead to some improvement. Doing this allows us to use the same approach as in the discrete state space.

\section{History-based Meta-State Representation}
\label{appendix:meta-state}
The history-based state representation is meant to capture the fact that the meta controller is only trying to chain the skills until it can apply a skill for achieving the goal fluent (Proposition \ref{prop:meta1}). This is generally a more compact representation compared to representation consisting of low-level landmark-satisfying states (which may contain an infinite number of states when the space is continuous), particularly if repetition of already executed skills is disallowed (which means execution history is bounded by $|\mathbb{O}_\mathcal{L}|$). 

As we will see in Proposition \ref{prop:meta1}, a history-based representation is said to be sufficient, if we can reproduce the goal-reaching traces captured by the original MVTR model, by following the skill sequence in the history after the necessary skills have been learned. 
\begin{proposition}
\label{prop:meta1}
Let $\tau$ be a trace that is captured by the MVTR model and let $S_{\mathcal{L}} = \langle s_1^\mathcal{L},..,s_k^\mathcal{L}\rangle$ be the ordered sequence of low-level landmark states from $\tau$ capturing landmark facts and satisfies their respective ordering. Then a history-based representation of the meta controller state is sufficient if for every consecutive pair of states $s_i^\mathcal{L}, s_{i+1}^\mathcal{L}$, there exists a skill with a non-zero probability of achieving $s_{i+1}^\mathcal{L}$ from $s_i^\mathcal{L}$.
\end{proposition}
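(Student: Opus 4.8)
The plan is to exhibit an explicit meta-controller policy defined purely on the history-based representation that reproduces the MVTR-guaranteed goal-reaching trace with non-zero probability, thereby certifying sufficiency in the sense stated just before the proposition. First I would invoke the second MVTR condition (Definition~\ref{def:min_viable}) to fix a concrete goal-reaching trace $\tau$ that instantiates a symbolic plan, and read off its ordered landmark states $S_{\mathcal{L}} = \langle s_1^\mathcal{L}, \ldots, s_k^\mathcal{L}\rangle$ with $s_k^\mathcal{L} \in S_{\mathcal{G}}$. By the subgoal-skill definition, the initiation set of a skill for a landmark fact $f_{i+1}$ already contains states satisfying the preceding landmark $f_i$, so the hypothesis---that each consecutive pair $s_i^\mathcal{L}, s_{i+1}^\mathcal{L}$ is connected by some skill of non-zero transition probability---lets me pick one such skill $o_i$ for each $i$, together with a skill $o_1$ connecting the initial state $s_0$ to $s_1^\mathcal{L}$.

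Next I would define the candidate meta-policy directly on histories by setting $\pi_{meta}(\langle o_1, \ldots, o_{i-1}\rangle) = o_i$. The crucial observation is that the history prefix uniquely indexes the position $i$ along this fixed skill sequence, so the policy is well-defined and requires no access to the underlying low-level state. I would then argue by a short induction on $i$ that, conditioned on having reached $s_i^\mathcal{L}$, executing $\pi_{meta}$ lands the agent at $s_{i+1}^\mathcal{L}$ with strictly positive probability; multiplying these transition probabilities (all non-zero by hypothesis) yields a strictly positive probability of traversing $s_1^\mathcal{L}, \ldots, s_k^\mathcal{L}$ in order, i.e., of reproducing $\tau$ and hence reaching a goal state.

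The step I expect to be the main obstacle is justifying that discarding the low-level state incurs no loss: a stochastic skill may terminate at a landmark-satisfying state other than the intended $s_i^\mathcal{L}$, so the history $\langle o_1, \ldots, o_{i-1}\rangle$ is genuinely ambiguous about the current MDP state. The resolution I would emphasize is that sufficiency here demands only reproducing \emph{one} goal-reaching trace with non-zero probability, not optimal control from every reachable state; off-trace terminations belong to other branches of the execution tree and are simply irrelevant to the positive-probability branch we are tracking. I would close by noting that this matches the intended reading of sufficiency given above the proposition, and that, combined with the goal-reachability guarantee for individual skills in Proposition~\ref{prop:goal_reachability}, the history representation indeed suffices to chain the learned skills all the way to the goal fluent.
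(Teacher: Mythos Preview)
The paper states Proposition~\ref{prop:meta1} (and the companion Proposition~\ref{prop:meta2}) without an accompanying proof or proof sketch; the surrounding text in Appendix~\ref{appendix:meta-state} simply records the sufficiency claim and moves on. So there is no paper-side argument to compare against.

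Assessed on its own terms, your proposal is sound and tracks exactly the notion of sufficiency the paper fixes just above the proposition: reproduce the MVTR-guaranteed goal-reaching trace by following a skill sequence read off the history alone. Fixing the skills $o_1,\ldots,o_k$ from the hypothesis, defining $\pi_{meta}$ on history prefixes, and chaining the strictly positive transition probabilities by induction is the natural construction, and your handling of the ambiguity objection (off-trace terminations live on other branches and do not affect the positive-probability branch) is the right resolution given that sufficiency only demands non-zero probability of reproduction. One small point worth tightening: the hypothesis as stated ranges over consecutive pairs $s_i^{\mathcal{L}}, s_{i+1}^{\mathcal{L}}$, so the link from $s_0$ to $s_1^{\mathcal{L}}$ is not literally covered; you should either fold $s_0$ into $S_{\mathcal{L}}$ as $s_0^{\mathcal{L}}$ or invoke the subgoal-skill definition (which sets $\mathbb{I}_f=\{s_0\}$ for the first landmark) to justify that the first hop is available. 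With that caveat, the argument is complete.
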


\begin{proposition}
\label{prop:meta2}
It is sufficient to keep track of the last executed state, if the optimal value for all states in the termination state of a learned skill are equal.
\end{proposition}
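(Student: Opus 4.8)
The plan is to recognize the ``last executed state'' representation as a value-preserving aggregation of the history-based representation---which by Proposition~\ref{prop:meta1} is already sufficient---and then use the hypothesis to show this aggregation discards nothing relevant to optimal meta-control. First I would set up the meta-MDP over histories $h=\langle o_1,\dots,o_t\rangle$ and, invoking Proposition~\ref{prop:meta1}, take the optimal meta-value $V^*(h)$ as well-defined with an induced optimal meta-policy. Alongside it I would introduce $V^*_{LL}(s)$, the optimal value-to-go in the task MDP from a low-level landmark state $s$, and observe that executing $h$ leaves the agent in some terminal state of its last skill $o_t$, so that $V^*(h)$ is the expectation of $V^*_{LL}$ over the terminal-state distribution that $h$ induces.

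Next I would apply the hypothesis that all states in the termination set of each learned skill share one common optimal value, say $c_{o}$ for skill $o$. Since every terminal state reachable after $h$ lies in the termination set of $o_t$ and hence has value $c_{o_t}$, the expectation collapses to $V^*(h)=c_{o_t}$, independent of everything in $h$ but the last skill. Thus any two histories agreeing on their last skill (equivalently, ending in value-equivalent landmark states) are indistinguishable at the level of optimal value, and the map carrying a history to its last executed state is exactly a $V^*$-irrelevance abstraction in the sense of \Shortcite{LiWL06Towards}.

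I would then lift this value-level collapse to the policy level: the optimal continuation from $h$ is the skill $o'$ maximizing the immediate meta-reward plus $\gamma\,V^*([h,o'])$, and since $V^*([h,o'])=c_{o'}$ depends only on $o'$, the maximizer never consults the earlier history. Hence an optimal meta-policy is realizable as a function of the last executed state alone, and combined with the guarantee of Proposition~\ref{prop:meta1} that the retained skills reproduce the goal-reaching trace, this shows tracking only the last executed state is sufficient.

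The main obstacle is the bookkeeping in the second step: the terminal-state distribution after a history genuinely depends on the \emph{entire} history, because each skill starts where the previous one terminated, so absent the hypothesis $V^*(h)$ would not collapse. The crux is to argue that the hypothesis precisely neutralizes this path-dependence---the distribution over terminal states may vary with $h$, but the integrand $V^*_{LL}$ is constant on each skill's termination set, so its integral does not---and then to verify that equality of values genuinely transfers to policy sufficiency rather than merely matching returns.
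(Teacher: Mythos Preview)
The paper states Proposition~\ref{prop:meta2} in Appendix~\ref{appendix:meta-state} without any accompanying proof or proof sketch, so there is nothing in the paper against which to compare your argument.

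Evaluated on its own merits, your plan is essentially correct and is the natural argument. The central move---writing $V^*(h)$ as an expectation of the low-level optimal value over the terminal-state distribution induced by $h$, then invoking the hypothesis to collapse that integrand to a constant $c_{o_t}$ on each skill's termination set---cleanly yields $V^*(h)=c_{o_t}$ and identifies the compression as a $V^*$-irrelevance abstraction in the sense of \Shortcite{LiWL06Towards}. Two points are worth tightening. First, to lift value equality to \emph{policy} sufficiency you need $Q^*(h,o')$, not just $V^*(h)$, to depend only on the last element; your Bellman step silently assumes the expected immediate meta-reward $E[\mathcal{R}_{meta}]$ is also history-independent. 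In the paper's setting $\mathcal{R}_{meta}$ is zero except at goal states, so this holds provided each learned skill's termination set is either contained in or disjoint from $S_{\mathcal{G}}$ (which is how the skills are constructed), but you should say so explicitly rather than fold it into ``the maximizer never consults the earlier history.'' Second, the proposition's phrase ``last executed state'' is ambiguous between the last skill $o_t$ and the current low-level landmark state; your argument literally establishes that $o_t$ suffices, and sufficiency of the current low-level state then follows a fortiori since it determines which termination set, hence which $c_o$, applies.
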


\section{Results of Additional Experiments}
\label{appendix:result-complementary}

\subsection{Performance in Continuous Domains}
\label{appendix:result-continuous}
We conducted an additional experiment in the Mario environment, in which images are used as MDP states. Our approach (in both standard learning setting and curriculum setting) achieves an average 0.9 (out of 1.0) success rate in the Pixel-Mario environment, which is comparable to that in the discrete version. The learning curves are shown in Fig. \ref{fig:result-mario-image}. This result confirms that our clustering-based approach is able to scale \KGRL to problems with large continuous state space. 

\subsection{Performance when Complete and Correct Symbolic Models Are Given}
\label{appendix:result-accurate-model}
We investigated whether \KGRL can also work well when an accurate symbolic model is given (in which all symbolic actions are executable and landmark fluents can uniquely capture MDP states with non-zero goal-reaching probability). With accurate symbolic models, our approach, Plan-HRL and Landmark-HRL can all efficiently solve the tasks with a 1.0 success rate. The learning curves are shown in Fig. \ref{fig:result-accurate}. This confirms that \KGRL is a more general approach that can be applied to both incomplete domain models and ``complete" models. 

\subsection{Performance when Different Meta-State Representations Are Used}
\label{appendix:different-meta-s}
The version of \KGRL that uses low-level MDP states as $S_{meta}$ is denoted as \KGRLAlgo-Meta-MDP (or \KGRLAlgo-Curriculum-Meta-MDP if the curriculum setting is used). Results (Fig. \ref{fig:result-diff-meta}) suggest that there is no significant performance difference.

\begin{figure}[t]
\centering
\includegraphics[width=0.2\textwidth]{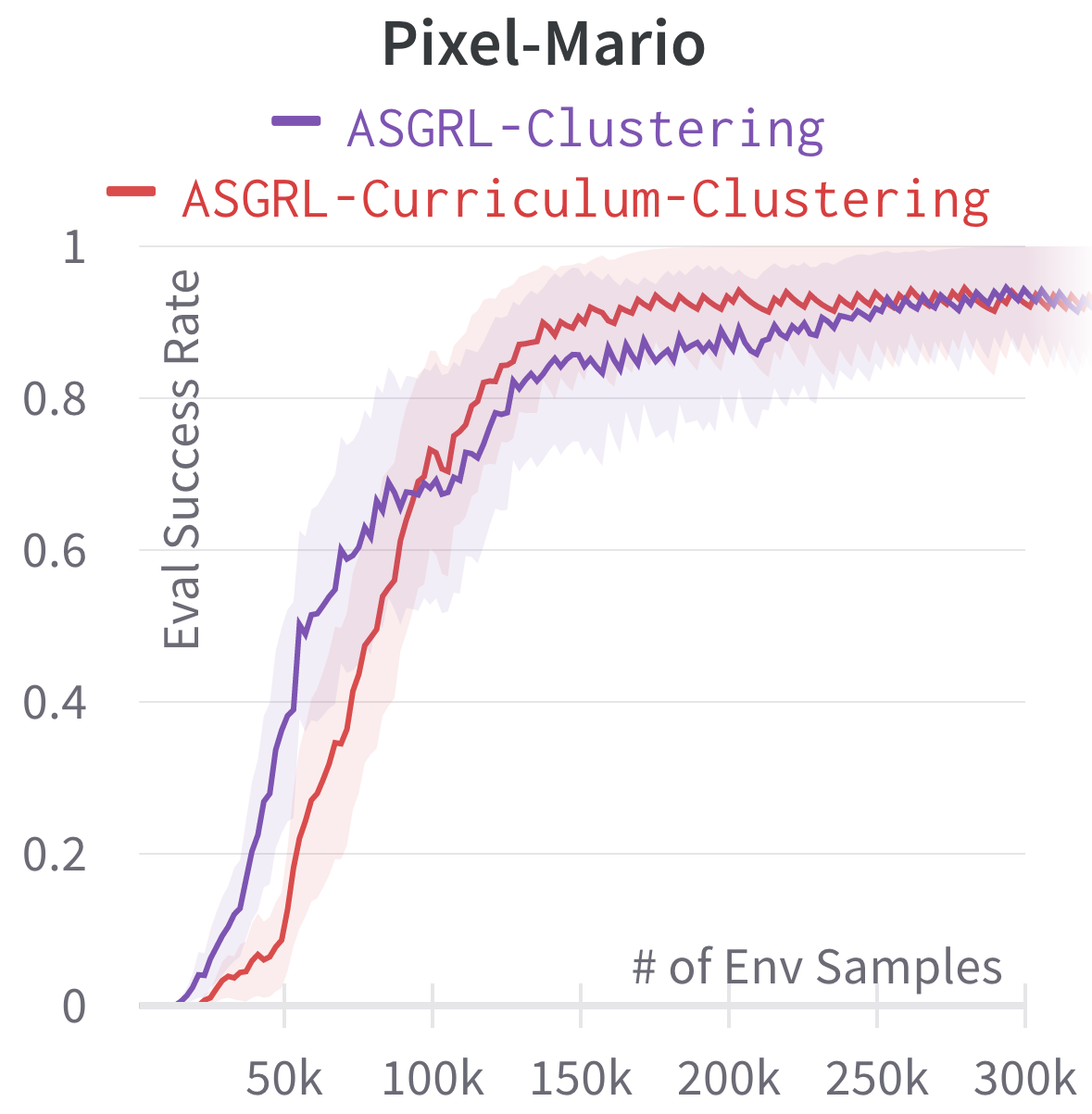}
\caption{The learning curves of our approach in Pixel-Mario.}
\label{fig:result-mario-image}
\end{figure}

\begin{figure}[t]
\centering
\includegraphics[width=0.45\textwidth]{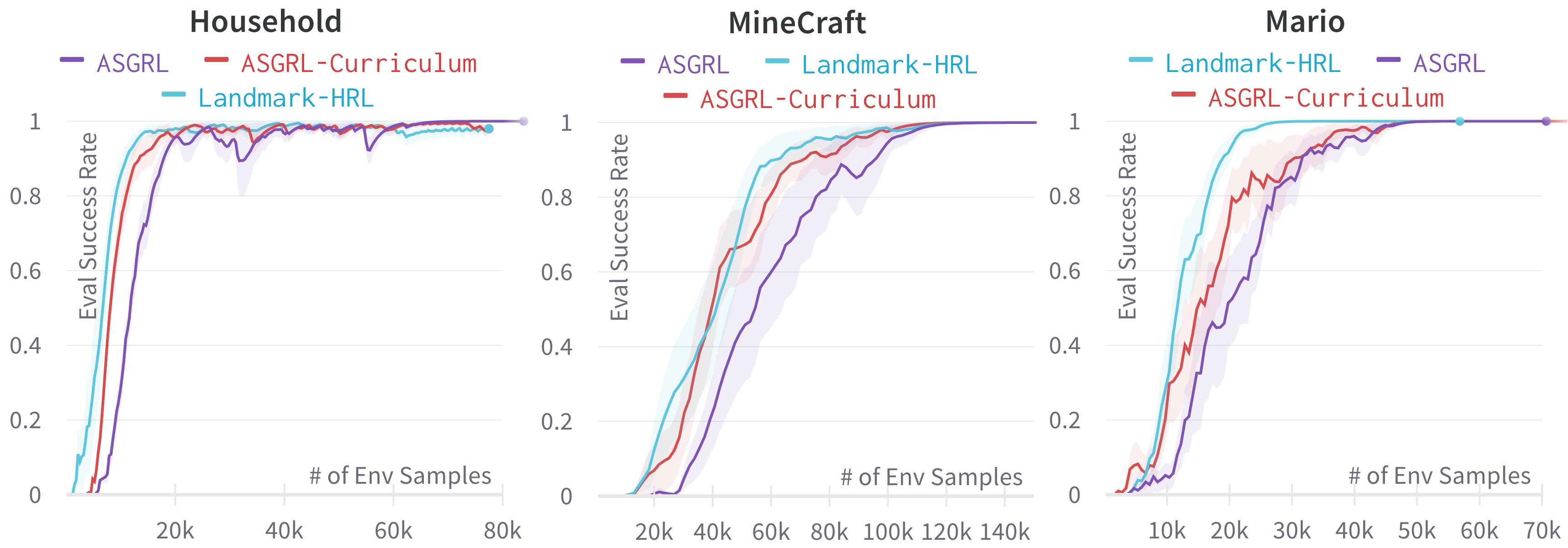}
\caption{The learning curves of our approach and Landmark-HRL when complete and correct symbolic models are provided. The curve of Plan-HRL is omitted in this figure because it almost overlaps with the curve of Landmark-HRL.}
\label{fig:result-accurate}
\end{figure}

\begin{figure}[t]
\centering
\includegraphics[width=0.45\textwidth]{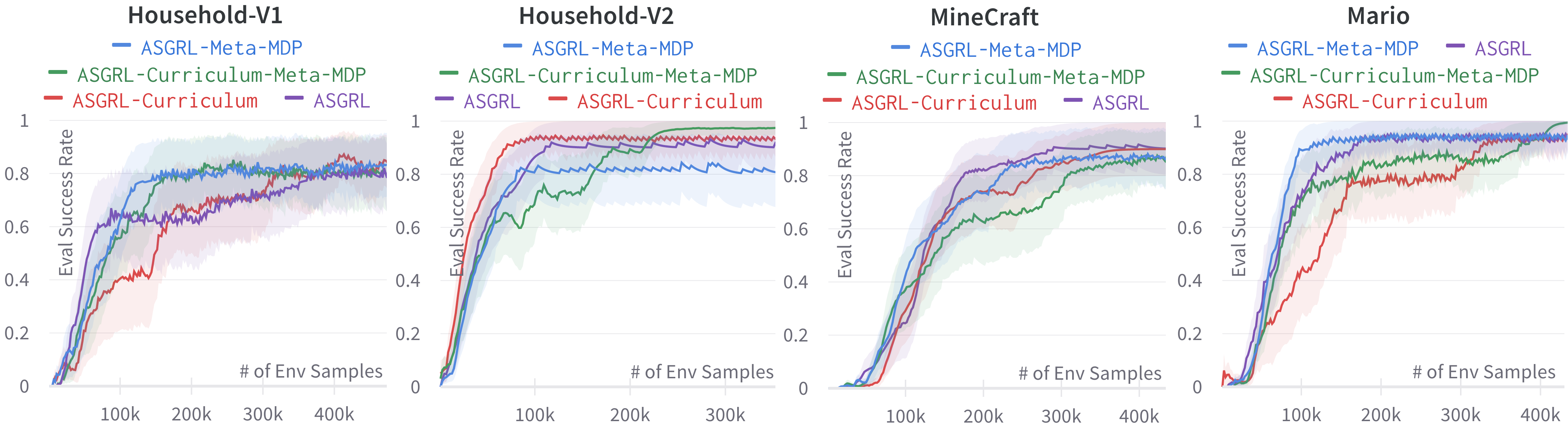}
\caption{Performance when different meta-state representations are used.}
\label{fig:result-diff-meta}
\end{figure}

\section{Implementation Details and Hyper-parameters}
\label{appendix:implementation-details}

For all discrete domains, we use a compact grid encoding. The state is represented as a multi-dimensional vector in which the index of each element corresponds to a specific location and the value of each element corresponds to the type of object that appears at that location. The Household environment is built on the codebase developed by ~\citet{gym_minigrid}.

To balance exploration and exploitation, we use $\epsilon$-greedy in our approach and other baselines. Each skill policy maintains its own $\epsilon_1$ value. $\epsilon_1$ is annealed from $1.0$ to $0.05$ by a factor of $0.95$ whenever the skill successfully reaches a skill terminal state. The meta-controller starts with $\epsilon_2=1.0$ and decreases it by a factor of $0.9$ whenever the low-level skills reach the final goal state(s) until $\epsilon_2=0.05$. Under our curriculum learning setting, the number of diverse skills to learn is increased whenever the $\epsilon_1$ of any skill drops below $0.3$ until the maximum number of skills is reached.

Similar to \cite{yang2018peorl}, the learning rate of each skill policy is also annealed from $1.0$ to $0.1$ by a factor of $0.95$ every time the skill reaches a landmark state. To accelerate the learning process, inspired by the self-imitation learning \cite{oh2018self}, we use a separate replay buffer to store recent ``successful" trajectories. At each parameter update step, additional training data are sampled uniformly from this buffer such that potentially important experience can be used for parameter update more frequently. Also note that, learning from a sparse binary $\mathcal{R}_{meta}$ can be challenging. In practice, we augment it with a shaping reward that assigns $+1$ whenever a target subgoal is reached. This shaping reward biases the meta-controller to select the sequence of skills that can satisfy as many subgoals as possible.

The baselines Plan-HRL and Landmark-HRL use a similar implementation of the low-level RL agent and meta-controller in our approach. But they differ in how the subgoals and the low-level rewards are defined. In Landmark-Shaping and Goal-Q-Learning, only one universal policy is learned. As there is no subgoal being used in Landmark-Shaping and Goal-Q-Learning, we perform $\epsilon$ annealing at the end of each episode regardless of whether the final goal is reached or not. To ensure sufficient exploration, we use a smaller annealing factor $0.995$ for Landmark-Shaping and Goal-Q-Learning.

\clearpage
\section{Symbolic Models}
\label{appendix:model}
\subsection{Household-V1}
\label{appendix:model-dock-v1}
\begin{verbatim}
# Domain Model
(define (domain grid_world)
    (:requirements :strips :typing)
    (:types key - object)
    (:predicates  (has-key)
                  (at-starting-room)
                  (holding ?x - key)
                  (door-open)
                  (door-ajar)
                  (charge)
                  (at-final-room)
                  (at-destination))
    (:action pickup_key
            :parameters (?k - key)
            :precondition (and )
            :effect (and (has-key) (holding ?k)))
    (:action charge_door
            :parameters ()
            :precondition (has-key)
            :effect (and (charged)))
    (:action open_door
            :parameters ()
            :precondition (has-key)
            :effect (and (door-open)))
    (:action pass_through_door
            :parameters ()
            :precondition (and (door-open) (charged))
            :effect (and (at-final-room) (door-ajar)))
    (:action go_to_destination
        :parameters ()
        :precondition (and (at-final-room))
        :effect (and (at-destination)))
)
# Problem Model
(define (problem prob)
    (:domain grid_world)
    (:objects 
        yellow green red - key)
    (:init 
        (at-starting-room))
    (:goal
        (and (at-destination))
))
\end{verbatim}

The landmarks and the immediate ordering is as follows. $\texttt{(has-key)}\prec\texttt{( door-open)}$,$\texttt{(has-key)}\prec\texttt{(charged)}$, $\texttt{(door-open)}\prec\texttt{(at-final-room)}$, $\texttt{(charged)}\prec\texttt{(at-final-room)}$, $\texttt{(at-final-room)}\prec\texttt{(at-destination)}$

\clearpage
\subsection{Household-V2}
\label{appendix:model-dock-v2}
\begin{verbatim}
# Domain Model
(define (domain grid_world)
    (:requirements :strips :typing)
    (:predicates (at-starting-room)
                 (door-open)
                 (door-ajar)
                 (at-final-room)
                 (at-destination))
    (:action pass_through_door
            :parameters ()
            :precondition (and )
            :effect (and (at-final-room)
                         (door-ajar)))
    (:action go_to_destination
        :parameters ()
        :precondition (and (at-final-room))
        :effect (and (at-destination)))
)
# Problem Model
(define (problem prob)
    (:domain grid_world)
    (:objects 
    )
    (:init 
        (at-starting-room))
    (:goal
        (and (at-destination))
))
\end{verbatim}
The landmarks and the immediate ordering is as follows $ \texttt{(at-final-room)}\prec\texttt{(at-destination)}$
\clearpage
\subsection{MineCraft}
\label{appendix:model-minecraft}
\begin{verbatim}
# Domain Model
(define (domain minecraft)
    (:requirements :strips :typing)
    (:types wood - object)
    (:predicates (wood-processed)
                 (at-starting-location)
                 (plank_made)
                 (stick_made)
                 (ladder_made))
    (:action get_processed_wood
            :parameters ()
            :precondition (and )
            :effect (and (wood-processed)))
    (:action make_plank
            :parameters ()
            :precondition (and (wood-processed))
            :effect (and (plank_made)))
    (:action make_stick
            :parameters ()
            :precondition (and (wood-processed))
            :effect (and (stick_made)))
    (:action make_ladder
            :parameters ()
            :precondition (and (stick_made) (plank_made))
            :effect (and (ladder_made)))
)
# Problem Model
(define (problem prob)
    (:domain minecraft)
    (:objects 
        wood0 wood1 - wood)
    (:init 
        (at-starting-location))
    (:goal
        (and (ladder_made))
))
\end{verbatim}
The landmarks and the immediate ordering is as follows $\texttt{(wood-processed)}\prec\texttt{(plank\_made)}$, 
$\texttt{(wood-processed)}\prec \texttt{(stick\_made)}$, $\texttt{(plank\_made)}\prec\texttt{(at-destination)}$ and $\texttt{(stick\_made)}\prec\texttt{(at-destination)}$

\clearpage
\subsection{Mario}
\label{appendix:model-mario}
\begin{verbatim}
# Domain Model
(define (domain Mario)
    (:requirements :strips :typing)
    (:types key - object)
    (:predicates  (has-key)
                  (at-upper-platform)
                  (at-bottom)
                  (at-upper-platform-with-key)
                  (door-open))
    (:action go_down_the_ladder
            :parameters ()
            :precondition (and (at-upper-platform))
            :effect (and (at-bottom) ))
    (:action pickup_key
            :parameters ()
            :precondition (and (at-bottom))
            :effect (and (has-key) ))
    (:action go_up_the_ladder
            :parameters ()
            :precondition (and (has-key) (at-bottom))
            :effect (and (at-upper-platform-with-key)))
    (:action unlock_door
            :parameters ()
            :precondition (and (at-upper-platform-with-key))
            :effect (and (door-open)))
)
# Problem Model
(define (problem prob)
    (:domain Mario)
    (:objects 
    )
    (:init 
        (at-upper-platform))
    (:goal
        (and (door-open))
))
\end{verbatim}
The landmarks and the ordering information for the model is as follows $\texttt{at-upper-platform} \prec \texttt{at-bottom} \prec \texttt{has-key} \prec \texttt{at-upper-platform-with-key} \prec \texttt{door-open}$.

\end{document}